\newcommand{\cmt}[1]{{\footnotesize\textcolor{red}{#1}}}
\newcommand{\todo}[1]{\cmt{(TODO: #1)}}
\newcommand{\eye}{\boldsymbol{I}}
\newcommand{\task}{\mathcal{T}}
\newcommand{\loss}{\mathcal{L}}
\newcommand{\data}{\mathcal{D}}
\newcommand{\inp}{\mathbf{x}}
\newcommand{\out}{\mathbf{y}}
\newcommand{\buffer}{\mathcal{B}}
\newcommand{\cW}{\mathcal{W}}
\newcommand{\cL}{\mathcal{L}}
\newcommand{\cO}{O}
\newcommand{\bR}{\mathbb{R}}
\newcommand{\bE}{\mathbb{E}}
\newcommand{\bP}{\mathbb{P}}
\newtheorem{theorem}{Theorem}
\newtheorem{lemma}{Lemma}
\newtheorem{definition}{Definition}
\newtheorem{assumption}{Assumption}
\newtheorem{corollary}{Corollary}
\newcommand{\param}{{\mathbf{w}}}               
\newcommand{\udparam}{\tilde{\param}}     
\newcommand{\fn}{f}                  
\newcommand{\udfn}{\tilde{\fn}}        
\newcommand{\fnht}{\hat{\fn}}        
\newcommand{\ud}{{\bm{U}}}                  
\newcommand{\dloss}{\ell}            
\newcommand{\px}{\bm{\theta}}    
\newcommand{\py}{\bm{\phi}}    
\newcommand{\pz}{\bm{\psi}}    
\newcommand{\step}{k} 
\newcommand{\udpx}{\tilde{\px}}
\newcommand{\udpy}{\tilde{\py}}
\newcommand{\Aq}{{\bm{A}}}
\newcommand{\bq}{{\bm{b}}}
\newcommand{\zero}{\bm{0}}
\newcommand{\bfn}{\bm{\varphi}}
\newcommand{\jacobian}{\nabla}
\icmltitlerunning{Online Meta-Learning}
\begin{document}

\twocolumn[
\icmltitle{Online Meta-Learning}

\icmlsetsymbol{equal}{*}

\begin{icmlauthorlist}
\icmlauthor{Chelsea Finn}{equal,berkeley}
\icmlauthor{Aravind Rajeswaran}{equal,uw}
\icmlauthor{Sham Kakade}{uw}
\icmlauthor{Sergey Levine}{berkeley}
\end{icmlauthorlist}

\icmlaffiliation{uw}{University of Washington}
\icmlaffiliation{berkeley}{University of California, Berkeley}

\icmlcorrespondingauthor{Chelsea Finn}{{\tt cbfinn@cs.stanford.edu}}
\icmlcorrespondingauthor{Aravind Rajeswaran}{{\tt aravraj@cs.washington.edu}}

\vskip 0.3in
]

\printAffiliationsAndNotice{\icmlEqualContribution}

\begin{abstract}
A central capability of intelligent systems is the ability to continuously build upon previous experiences to speed up and enhance learning of new tasks. Two distinct research paradigms have studied this question. Meta-learning views this problem as learning a prior over model parameters that is amenable for fast adaptation on a new task, but typically assumes the set of tasks are available together as a batch. In contrast, online (regret based) learning considers a sequential setting in which problems are revealed one after the other, but conventionally train only a single model without any task-specific adaptation.
This work introduces an online meta-learning setting, which merges ideas from both the aforementioned paradigms to better capture the spirit and practice of continual lifelong learning. We propose the follow the meta leader (FTML) algorithm which extends the MAML algorithm to this setting. Theoretically, this work provides an  $\cO(\log T)$ regret guarantee with  one additional higher order smoothness assumption (in comparison to the standard online setting). 
Our experimental evaluation on three different large-scale tasks suggest that the proposed algorithm significantly outperforms alternatives based on traditional online learning approaches.
\end{abstract}
\vspace*{-10pt}

\section{Introduction}
\label{sec:intro}

Two distinct research paradigms have studied how prior tasks or experiences can be used by an agent to inform future learning.
Meta-learning~\cite{schmidhuber1987,matchingnets,maml} casts this as the problem of \emph{learning to learn}, where past experience is used to acquire a prior over model parameters or a learning procedure, and typically studies a setting where a set of meta-training tasks are made available together upfront. In contrast, online learning~\cite{Hannan, CesaBianchi2006PLA} considers a sequential setting where tasks are revealed one after another, but aims to attain zero-shot generalization without any task-specific adaptation. We argue that neither setting is ideal for studying continual lifelong learning. Meta-learning deals with learning to learn, but neglects the sequential and non-stationary aspects of the problem. Online learning offers an appealing theoretical framework, but does not generally consider how past experience can accelerate adaptation to a new task. In this work, we motivate and present the \emph{online meta-learning} problem setting, where the agent simultaneously uses past experiences in a sequential setting to learn good priors, and also adapt quickly to the current task at hand.

\begin{figure}[b!]
    \centering
    \includegraphics[height=2.5cm]{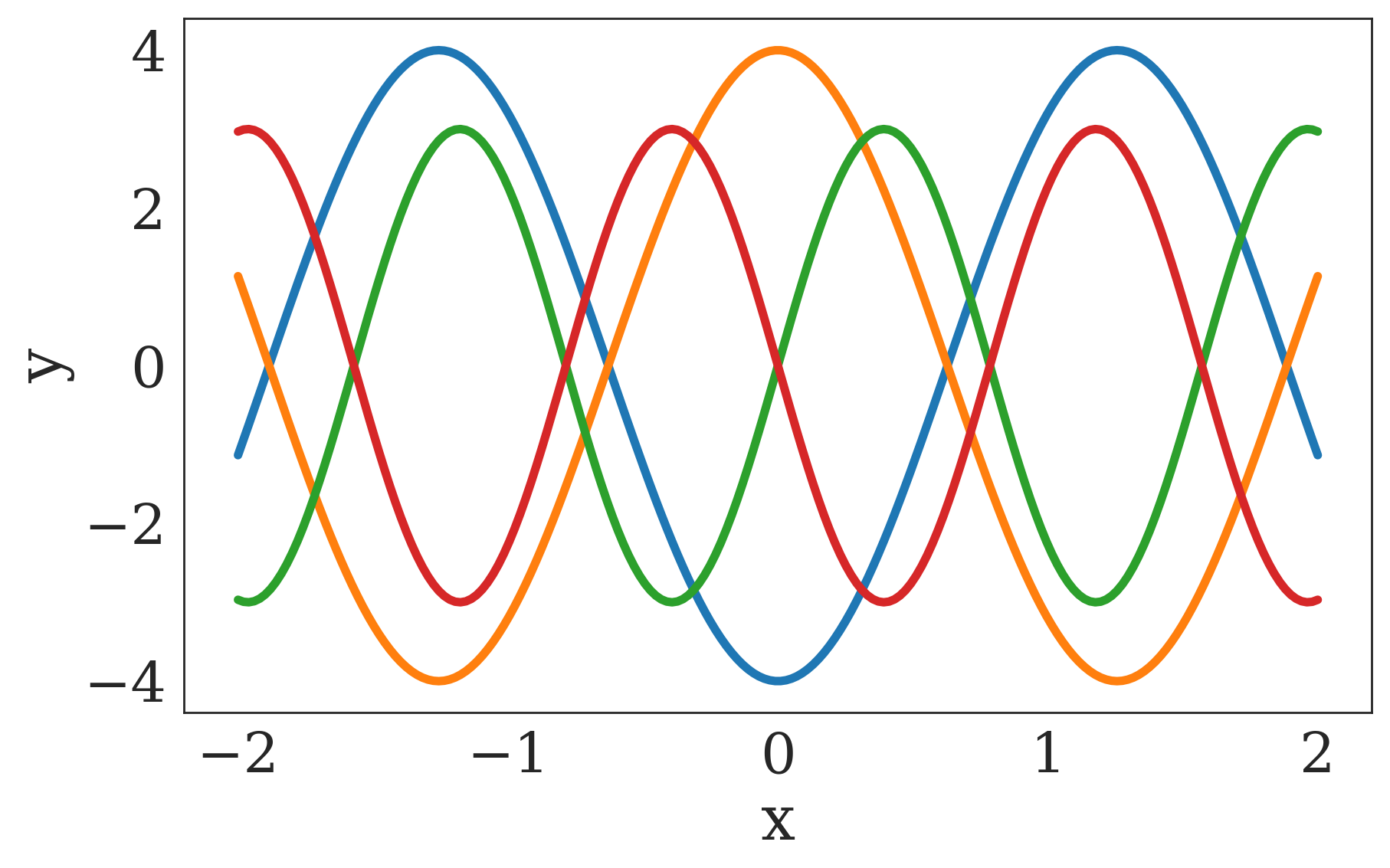}
    \includegraphics[height=2.2cm]{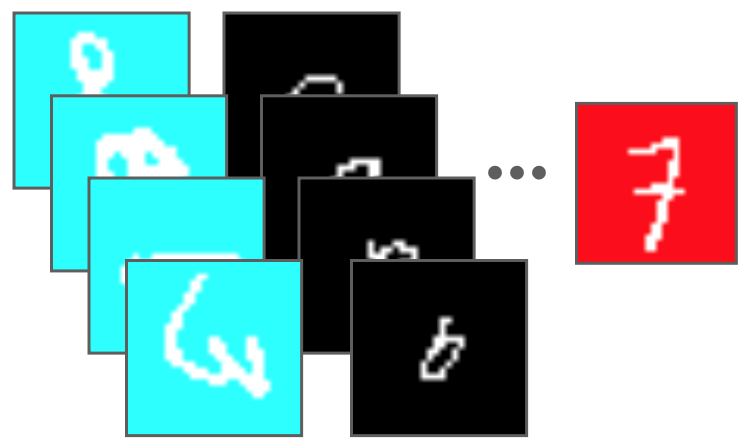}
    \vspace{-0.3cm}
    \caption{(left) sinusoid functions and (right) colored MNIST}
    \label{fig:teaser}
\end{figure}

As an example, Figure~\ref{fig:teaser} shows a family of sinusoids. Imagine that each task is a regression problem from $x$ to $y$ corresponding to {\em one} sinusoid. When presented with data from a large collection of such tasks, a na\"{i}ve approach that does not consider the task structure would collectively use all the data, and learn a prior that corresponds to the model $y=0$. An algorithm that understands the underlying structure would recognize that each curve in the family is a (different) sinusoid, and would therefore attempt to identify, for a new batch of data, which sinusoid it corresponds to. As another example where na\"{i}ve training on prior tasks fails, Figure~\ref{fig:teaser} also shows colored MNIST digits with different backgrounds. Suppose we've seen MNIST digits with various colored backgrounds, and then observe a ``7'' on a new color. We might conclude from training on all of the data seen so far that all digits with that color must all be ``7.''
In fact, this is an optimal conclusion from a purely statistical standpoint. 
However, if we understand that the data is divided into different tasks, and take note of the fact that each task has a different color, a better conclusion is that the color is irrelevant. Training on all of the data together, or only on the new data, does not achieve this goal.

Meta-learning offers an appealing solution: by learning how to learn from past tasks, we can make use of task structure and extract information from the data that both allows us to succeed on the current task and adapt to new tasks more quickly. However, typical meta learning approaches assume that a sufficiently large set of tasks are made available upfront for meta-training. In the real world, tasks are likely available only sequentially, as the agent is learning in the world, and also from a non-stationary distribution. 
By recasting meta-learning in a sequential or online setting, that does not make strong distributional assumptions, we can enable faster learning on new tasks as they are presented.

\textbf{Our contributions:}  \
In this work, we formulate the online meta-learning problem setting and present the {\em follow the meta-leader (FTML)} algorithm. This extends the MAML algorithm to the online meta-learning setting, and is analogous to follow the leader in online learning.
We analyze FTML and show that it enjoys a $\cO(\log T)$ regret guarantee when competing with the best meta-learner in hindsight. In this endeavor, we also provide the first set of results (under any assumptions) where MAML-like objective functions can be provably and efficiently optimized.
We also develop a practical form of FTML that can be used effectively with deep neural networks on large scale tasks, and show that it significantly outperforms prior methods. The experiments involve vision-based sequential learning tasks with the MNIST, CIFAR-100, and PASCAL 3D+ datasets.
\section{Foundations}
\label{sec:foundations}

Before introducing online meta-learning, we first briefly summarize the foundations of meta-learning, the model-agnostic meta-learning (MAML) algorithm, and online learning. To illustrate the differences in setting and algorithms, we will use the running example of few-shot learning, which we describe below first. We emphasize that online learning, MAML, and the online meta-learning formulations have a broader scope than few-shot supervised learning. We use the few-shot supervised learning example primarily for illustration.

\subsection{Few-Shot Learning} 
\label{sec:few_shot_learning}
In the few-shot supervised learning setting~\citep{mann}, we are interested in a family of tasks, where each task $\task$ is associated with a notional and infinite-size population of input-output pairs. In the few-shot learning, the goal is to learn a task while accessing only a small, finite-size labeled dataset \hbox{$\data_i := \{\inp_i, \out_i \}$} corresponding to task $\task_i$. If we have a predictive model, $\bm{h}(\cdot; \param)$, with parameters $\param$, the population risk of the model is 
$$
\fn_i(\param) \coloneqq \bE_{(\inp, \out) \sim \task_i} [ \dloss(\inp, \out, \param) ],
$$
where the expectation is defined over the task population and $\dloss$ is a loss function, such as the square loss or cross-entropy between the model prediction and the correct label. An example of $\dloss$ corresponding to squared error loss is
$$
\dloss(\inp, \out, \param) = || \out - \bm{h}(\inp ; \param)  ||^2.
$$
Let $\loss(\data_i, \param)$ represent the average loss on the dataset $\data_i$.
Being able to effectively minimize $\fn_i(\param)$ is likely hard if we rely only on $\data_i$ due to the small size of the dataset. However, we are exposed to many such tasks from the family --- either in sequence or as a batch, depending on the setting. By being able to draw upon the multiplicity of tasks, we may hope to perform better, as for example demonstrated in the meta-learning literature.

\subsection{Meta-Learning and MAML}
\label{sec:maml_foundations}
Meta-learning, or learning to learn~\citep{schmidhuber1987}, aims to effectively bootstrap from a set of tasks to learn faster on a new task. It is assumed that tasks are drawn from a fixed distribution, $\task \sim \bP(\task)$. At meta-training time, $M$ tasks $\{ \task_i \}_{i=1}^M$ are drawn from this distribution and datasets corresponding to them are made available to the agent. 
At deployment time, we are faced with a new test task $\task_{j} \sim \bP(\task)$, for which we are again presented with a small labeled dataset $\data_{j} := \{\inp_{j}, \out_{j} \}$. Meta-learning algorithms attempt to find a model using the $M$ training tasks, such that when $\data_{j}$ is revealed from the test task, the model can be quickly updated to minimize $\fn_j(\param)$.

Model-agnostic meta-learning (MAML)~\cite{maml} does this by learning an initial set of parameters $\param_{\mathrm{MAML}}$, such that at meta-test time, performing a few steps of gradient descent from $\param_{\mathrm{MAML}}$ using $\data_{j}$ minimizes $\fn_j(\cdot)$. To get such an initialization, at meta-training time, MAML solves the optimization problem:
\begin{equation} \label{eq:maml_optimization}
    \param_{\mathrm{MAML}} := \arg \min_\param \ \frac{1}{M} \sum_{i=1}^M \fn_i \big( \param - \alpha \nabla \fnht_i(\param) \big).
\end{equation}
The inner gradient $\nabla \fnht_i(\param)$ is based on a small mini-batch of data from $\data_i$. Hence, MAML optimizes for few-shot generalization.
Note that the optimization problem is subtle: we have a gradient descent update step embedded in the actual objective function. Regardless,
\citet{maml} show that gradient-based methods can be used on this optimization objective with existing automatic differentiation libraries.
Stochastic optimization techniques are used to solve the optimization problem in Eq.~\ref{eq:maml_optimization} since the population risk is not known directly. At meta-test time, the solution to Eq.~\ref{eq:maml_optimization} is fine-tuned as: $\param_{j} \leftarrow \param_{\mathrm{MAML}} - \alpha \nabla \fnht_j(\param_{\mathrm{MAML}})$
with the gradient obtained using $\data_{j}$. 
Meta-training can be interpreted as learning a prior over model parameters, and fine-tuning as inference~\cite{erin}.

MAML and other meta-learning algorithms (see Section~\ref{sec:related_works}) are not directly applicable to sequential settings for two reasons. First, they have two distinct phases: meta-training and meta-testing or deployment. We would like the algorithms to work in a continuous learning fashion. Second, meta-learning methods generally assume that the tasks come from some fixed distribution, whereas we would like methods that work for non-stationary task distributions.

\subsection{Online Learning}

In the online learning setting, an agent faces a sequence of loss functions $\lbrace \fn_t \rbrace_{t=1}^\infty$, one in each round $t$. These functions need not be drawn from a fixed distribution, and could even be chosen adversarially over time. The goal for the learner is to sequentially decide on model parameters $\lbrace \param_t \rbrace_{t=1}^\infty$ that perform well on the loss sequence. In particular, the standard objective is to minimize some notion of regret defined as the difference between our learner's loss, $ \sum_{t=1}^T \fn_t(\param_t)$, and the best performance achievable by some family of methods (comparator class). The most standard notion of regret is to compare to the cumulative loss of the best {\em fixed} model in hindsight:
\begin{equation}
    \text{Regret}_T = \sum_{t=1}^T \fn_t (\param_t) - \min_\param \sum_{t=1}^T \fn_t(\param).
\end{equation}
The goal in online learning is to design algorithms such that this regret grows with $T$ as slowly as possible. In particular, an agent (algorithm) whose regret grows sub-linearly in $T$ is non-trivially learning and adapting.
One of the simplest algorithms in this setting is follow the leader (FTL)~\cite{Hannan}, which updates the parameters as: 
$$
\param_{t+1} = \arg \min_{\param} \ \sum_{\step=1}^t \fn_\step(\param). 
$$
FTL enjoys strong performance guarantees depending on the properties of the loss function, and some variants use additional regularization to improve stability~\cite{ShaiBook}.
For the few-shot supervised learning example, FTL would consolidate all the data from the prior stream of tasks into a single large dataset and fit a single model to this dataset. As alluded to in Section~\ref{sec:intro}, and as we find in our empirical evaluation in Section~\ref{sec:experiments},
such a ``joint training'' approach may not learn effective models. To overcome this issue, we may desire a more adaptive notion of a comparator class, and algorithms that have low regret against such a comparator, as we will discuss next.

\section{The Online Meta-Learning Problem}
\label{sec:problem_setting}

We consider a general sequential setting where an agent is faced with tasks one after another. Each of these tasks correspond to a {\em round}, denoted by $t$. In each round, the goal of the learner is to determine model parameters~$\param_t$ that perform well for the corresponding task at that round. This is monitored by \hbox{$\fn_t: \param \in \cW \rightarrow \bR$}, which we would like to be minimized.
Crucially, we consider a setting where the agent can perform some local {\em task-specific} updates to the model before it is deployed and evaluated in each round. This is realized through an update procedure, which at every round $t$ is a mapping $\ud_t: \param \in \cW \rightarrow \udparam \in \cW$. This procedure takes as input $\param$ and returns $\udparam$ that performs better on $\fn_t$. One example for $\ud_t$ is a step of gradient descent \cite{maml}: 
$$
\ud_t(\param) = \param - \alpha \nabla \fnht_t(\param).
$$
Here, as specified in Section~\ref{sec:maml_foundations}, $\nabla \fnht_t$ is potentially an approximate gradient of $\fn_t$, as for example obtained using a mini-batch of data from the task at round $t$. The overall protocol for the setting is as follows:
\vspace*{-10pt}
\begin{enumerate}
\itemsep0em
    \item At round $t$, the agent chooses a model defined by $\param_t$.
    \item The world simultaneously chooses task defined by $f_t$.
    \item The agent obtains access to the update procedure $\ud_t$, and uses it to update parameters as $\udparam_t = \ud_t(\param_t)$.
    \item The agent incurs loss $\fn_t(\udparam_t)$. Advance to round $t+1$.
\end{enumerate}
The goal for the agent is to minimize regret over the rounds. A highly ambitious comparator is the best meta-learned model in hindsight. Let $\{ \param_t \}_{t=1}^T$ be the sequence of models generated by the algorithm. Then, the regret we consider is: 
\begin{equation} \label{eq:oml_regret}
    \text{Regret}_T = \sum_{t=1}^T \fn_t \big( \ud_t (\param_t) \big) - \min_\param \sum_{t=1}^T \fn_t \big( \ud_t (\param) \big).
\end{equation}
Notice that we allow the comparator to adapt locally to each task at hand; thus the comparator has strictly more capabilities than the learning agent, since it is presented with all the task functions in batch mode. Here, again, achieving sublinear regret suggests that the agent is improving over time and is competitive with the best meta-learner in hindsight. As discussed earlier, in the batch setting, when faced with multiple tasks, meta-learning performs significantly better than training a single model for all the tasks. Thus, we may hope that learning sequentially, but still being competitive with the best meta-learner in hindsight, provides a significant leap in continual learning.

\section{Algorithm and Analysis}
\label{sec:algos_and_analysis}

In this section, we outline the {\em follow the meta leader} (FTML) algorithm and provide an analysis of its behavior.

\subsection{Follow the Meta Leader}
One of the most successful algorithms in online learning is follow the leader~\cite{Hannan, Kalai2005EfficientAF}, which enjoys strong performance guarantees for smooth and convex functions. Taking inspiration from its form, we propose the FTML algorithm template which updates model parameters as:
\begin{equation} \label{eq:ftml}
    \param_{t+1} = \arg \min_{\param} \ \sum_{\step=1}^t \fn_\step \big( \ud_\step (\param) \big).
\end{equation}
This can be interpreted as the agent playing the best meta-learner in hindsight if the learning process were to stop at round $t$. In the remainder of this section, we will show that under standard assumptions on the losses, and just one additional assumption on higher order smoothness, this algorithm has strong regret guarantees. In practice, we may not have full access to $\fn_\step(\cdot)$, such as when it is the population risk and we only have a finite size dataset. In such cases, we will draw upon stochastic approximation algorithms to solve the optimization problem in Eq.~(\ref{eq:ftml}).

\subsection{Assumptions}
We make the following assumptions about each loss function in the learning problem for all $t$. Let $\px$ and $\py$ represent two arbitrary choices of {\em model parameters}.

\begin{assumption}
($C^2$-smoothness) \\
\vspace*{-20pt}
\begin{enumerate}[leftmargin=*]
    \itemsep0em
    \item (Lipschitz in function value) $\fn$ has gradients  bounded by $G$, i.e. $|| \nabla \fn(\px) || \leq G \ \forall \ \px$. This is equivalent to $\fn$ being $G-$Lipschitz.
    \item (Lipschitz gradient) $\fn$ is $\beta-$smooth, i.e. \\ \hbox{$|| \nabla \fn(\px) - \nabla \fn(\py) || \leq \beta ||\px-\py|| \ \forall (\px,\py) $.}
    \item (Lipschitz Hessian) $\fn$ has $\rho-$Lipschitz Hessians, i.e. \\ \hbox{$ || \nabla^2 \fn(\px) - \nabla^2 \fn(\py) || \leq \rho ||\px-\py|| \ \forall (\px,\py) $.} 
\end{enumerate}
\end{assumption}

\begin{assumption}
(Strong convexity) 
Suppose that $\fn$ is convex. Furthermore, suppose $\fn$ is $\mu-$strongly convex, i.e. \hbox{$ || \nabla \fn(\px) - \nabla \fn(\py) || \geq \mu||\px-\py|| $.}
\end{assumption}

These assumptions are largely standard in online learning, in various settings~\cite{CesaBianchi2006PLA}, except~1.3. Examples where these assumptions hold include logistic regression and $L2$ regression over a bounded domain. Assumption 1.3 is a statement about the higher order smoothness of functions which is common in non-convex analysis~\cite{Nesterov2006CubicRO, Jin2017HowTE}. In our setting, it allows us to characterize the landscape of the MAML-like function which has a gradient update step embedded within it. Importantly, these assumptions {\em do not} trivialize the meta-learning setting. A clear difference in performance between meta-learning and joint training can be observed even in the case where $\fn(\cdot)$ are quadratic functions, which correspond to the simplest strongly convex setting. See Appendix~\ref{app:quadratic_example} for an example illustration.

\vspace{-0.1cm}
\subsection{Analysis}
\label{sec:grad_step_analysis}
\vspace{-0.1cm}

We analyze the FTML algorithm when the update procedure is a single step of gradient descent, as in the formulation of MAML. Concretely, the update procedure we consider is \hbox{$\ud_t(\param) = \param - \alpha \nabla \fnht_t(\param).$} For this update rule, we first state our main theorem below.

\begin{theorem} \label{thm:convexity}
Suppose $\fn$ and $\fnht: \bR^d \rightarrow \bR$ satisfy assumptions 1 and 2. Let $\udfn$ be the function evaluated after a one step gradient update procedure, i.e.
\vspace{-0.15cm}
$$
\vspace{-0.15cm}
\udfn(\param) \coloneqq \fn \big( \param - \alpha \nabla \fnht(\param) \big).
$$
If the step size is selected as $\alpha \leq \min \{ \frac{1}{2\beta}, \frac{\mu}{8 \rho G} \}$, then $\udfn$ is convex. Furthermore, it is also $\tilde{\beta}=9\beta/8$ smooth and $\tilde{\mu} = \mu/8$ strongly convex.
\vspace{-0.1cm}
\end{theorem}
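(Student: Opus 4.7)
The plan is to establish convexity, smoothness, and strong convexity of $\udfn$ by explicitly computing its Hessian and controlling each piece via the given assumptions on both $\fn$ and $\fnht$. Let $g(\param) := \param - \alpha \nabla \fnht(\param)$, so $\udfn(\param) = \fn(g(\param))$. Since Hessians are symmetric, the Jacobian $\nabla g(\param) = I - \alpha \nabla^2 \fnht(\param)$ is itself symmetric. Differentiating once gives $\nabla \udfn(\param) = \bigl(I - \alpha \nabla^2 \fnht(\param)\bigr)\, \nabla \fn(g(\param))$, and a second application of the product rule yields
\begin{equation*}
\nabla^2 \udfn(\param) \;=\; A(\param)\, H(\param)\, A(\param) \;+\; E(\param),
\end{equation*}
where $A(\param) := I - \alpha \nabla^2 \fnht(\param)$, $H(\param) := \nabla^2 \fn(g(\param))$, and $E(\param)$ is the third-order correction arising from differentiating $-\alpha \nabla^2 \fnht(\param)$ against $\nabla \fn(g(\param))$.

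The first step is to bound the error term $E$. Its $(i,j)$ entry equals $-\alpha \sum_k \bigl(\partial_i \partial_j \partial_k \fnht(\param)\bigr)\bigl[\nabla \fn(g(\param))\bigr]_k$, so Assumption~1.3 (the Lipschitz Hessian condition, which gives an operator-norm bound of $\rho$ on the third derivative of $\fnht$) combined with Assumption~1.1 ($\|\nabla \fn\| \leq G$) yields $\|E(\param)\|_{\mathrm{op}} \leq \alpha \rho G$. By the choice $\alpha \leq \mu/(8\rho G)$, this is at most $\mu/8$.

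Next, I would exploit the step-size condition $\alpha \leq 1/(2\beta)$ together with the fact that $\fnht$ is $\mu$-strongly convex and $\beta$-smooth (Assumptions~1.2 and~2) to control the spectrum of $A(\param)$. Because the eigenvalues of $\nabla^2 \fnht(\param)$ lie in $[\mu, \beta]$, the eigenvalues of $A(\param)$ lie in $[1 - \alpha\beta,\, 1 - \alpha\mu] \subseteq [1/2,\, 1]$. Combining this with $\mu I \preceq H(\param) \preceq \beta I$ and the identity $v^\top A H A\, v = (Av)^\top H (Av)$ gives the sandwich
\begin{equation*}
\tfrac{\mu}{4}\, I \;\preceq\; A(\param)\,H(\param)\,A(\param) \;\preceq\; \beta\, I.
\end{equation*}
Adding the $\|E\|_{\mathrm{op}} \leq \mu/8$ bound from the previous step yields $\nabla^2 \udfn(\param) \succeq (\mu/4 - \mu/8)\,I = (\mu/8)\,I$, which simultaneously proves convexity and the claimed $\tilde\mu = \mu/8$ strong convexity. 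For smoothness, since $\mu \leq \beta$ the error contributes at most $\beta/8$, so $\nabla^2 \udfn(\param) \preceq (\beta + \beta/8)\,I = (9\beta/8)\,I$.

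The main obstacle is the careful derivation and operator-norm bound on the third-order term $E$: this is the only place Assumption~1.3 is invoked, and it is precisely what forces the tight step-size constraint $\alpha \leq \mu/(8\rho G)$, since an unconstrained $E$ could otherwise dominate the $(\mu/4)\,I$ lower bound from $A H A$ and destroy strong convexity. Once $E$ is under control, the remaining spectral bookkeeping of $A H A$ is routine given that Assumptions~1 and~2 apply to both $\fn$ and $\fnht$.
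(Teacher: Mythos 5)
Your proof is correct, but it takes a genuinely different route from the paper. The paper never touches the Hessian of $\udfn$: it works entirely at first order, writing $\nabla \udfn(\px) - \nabla \udfn(\py) = \left( \nabla \ud(\px) - \nabla \ud(\py) \right) \nabla \fn(\udpx) + \nabla \ud(\py)\left( \nabla \fn(\udpx) - \nabla \fn(\udpy) \right)$ and bounding the two terms above and below using the triangle inequality, the Hessian-Lipschitz bound $\alpha\rho G$, and a separate contraction lemma showing $\| \ud(\px) - \ud(\py) \| \leq (1-\alpha\mu)\|\px-\py\|$ via the mean value inequality. You instead differentiate once more and obtain the exact decomposition $\nabla^2 \udfn = A H A + E$ with a spectral sandwich on $AHA$ and an operator-norm bound $\|E\| \leq \alpha\rho G$ on the third-order correction; the constants ($\mu/4$ from $(1-\alpha\beta)^2\mu$, the $\mu/8$ error budget, $9\beta/8$ and $\mu/8$) come out identically. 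Your approach buys something real: the condition $\nabla^2\udfn \succeq (\mu/8)\eye$ directly certifies convexity \emph{and} strong convexity, whereas the paper's lower bound $\|\nabla\udfn(\px)-\nabla\udfn(\py)\| \geq (\mu/8)\|\px-\py\|$ is, strictly speaking, only a necessary condition for strong convexity and does not by itself imply convexity of the composed function. The price is a mild regularity caveat: Assumption 1 only makes $\fnht$ twice differentiable with a $\rho$-Lipschitz Hessian, so the third partials $\partial_i\partial_j\partial_k\fnht$ defining $E$ exist only almost everywhere (Rademacher); you should either note that the a.e. Hessian bound suffices (integrate $\nabla\udfn$ along segments), or retreat to the paper's difference-quotient formulation, which avoids third derivatives entirely.
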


\begin{proof}
See Appendix~\ref{app:proofs}.
\end{proof}

The following corollary is now immediate.
\begin{corollary}
(inherited convexity for the MAML objective) If $\{ \fn_i, \fnht_i \}_{i=1}^K$ satisfy assumptions~1 and 2, then the MAML optimization problem,
\vspace{-0.25cm}
$$
\vspace{-0.15cm}
\underset{\param}{\text{minimize}} \ \frac{1}{M} \sum_{i=1}^M \fn_i \big( \param - \alpha \nabla \fnht_i(\param) \big),
$$
with $\alpha \leq \min \lbrace \frac{1}{2\beta}, \frac{\mu}{8 \rho G} \rbrace$
is convex. Furthermore, it is $9\beta/8$-smooth and  $\mu/8$-strongly convex.
\vspace{-0.1cm}
\end{corollary}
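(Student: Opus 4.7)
The plan is to obtain the corollary as a direct consequence of Theorem~\ref{thm:convexity} applied term by term, followed by a standard ``closure under averaging'' argument. First, I would observe that each summand in the MAML objective,
$$
\udfn_i(\param) \coloneqq \fn_i\big(\param - \alpha \nabla \fnht_i(\param)\big),
$$
has exactly the form of the function $\udfn$ appearing in Theorem~\ref{thm:convexity}, specialized to the pair $(\fn_i, \fnht_i)$. By hypothesis every such pair satisfies Assumptions~1 and~2 with common constants $G,\beta,\rho,\mu$, and the prescribed step size $\alpha \leq \min\{ \tfrac{1}{2\beta}, \tfrac{\mu}{8\rho G}\}$ is precisely the condition required by the theorem. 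Hence Theorem~\ref{thm:convexity} applies to every $i$, and yields that each $\udfn_i$ is convex, $\tilde\beta = 9\beta/8$-smooth, and $\tilde\mu = \mu/8$-strongly convex.

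The remaining step is to lift these three properties through the uniform average $F(\param) \coloneqq \tfrac{1}{M}\sum_{i=1}^M \udfn_i(\param)$. Convexity is immediate because nonnegative combinations of convex functions are convex. For smoothness, I would use the triangle inequality:
$$
\| \nabla F(\px) - \nabla F(\py) \| \leq \tfrac{1}{M}\sum_{i=1}^M \| \nabla \udfn_i(\px) - \nabla \udfn_i(\py) \| \leq \tfrac{9\beta}{8}\|\px-\py\|.
$$
For strong convexity, the Lipschitz-lower-bound form stated in Assumption~2 is awkward under averaging (the triangle inequality points the wrong way), so I would instead invoke the equivalent Hessian characterization permitted by the $C^2$-smoothness in Assumption~1: each $\nabla^2 \udfn_i \succeq \tfrac{\mu}{8} I$ (which is how Theorem~\ref{thm:convexity} actually establishes strong convexity in its proof), hence
$$
\nabla^2 F(\param) = \tfrac{1}{M}\sum_{i=1}^M \nabla^2 \udfn_i(\param) \succeq \tfrac{\mu}{8} I,
$$
giving $\mu/8$-strong convexity of $F$.

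I do not anticipate any substantive obstacle here: once Theorem~\ref{thm:convexity} is in hand, the corollary is purely a closure statement under finite averages. The only minor care point is using the Hessian (or function-value) characterization of strong convexity rather than the gradient-difference form, since the latter does not compose cleanly under summation. Everything else is a single line per property.
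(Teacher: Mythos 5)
Your proposal is correct and matches what the paper intends: the paper states this corollary without proof (``now immediate''), and the term-by-term application of Theorem~\ref{thm:convexity} followed by closure of convexity, smoothness, and strong convexity under uniform averaging is exactly the argument being elided. One small inaccuracy: the paper's proof of Theorem~\ref{thm:convexity} establishes strong convexity via the gradient-difference lower bound $\| \nabla \udfn(\px) - \nabla \udfn(\py) \| \geq \tfrac{\mu}{8}\|\px-\py\|$ rather than directly via the Hessian, but since each $\udfn_i$ is convex and $C^2$, that bound does imply $\nabla^2 \udfn_i \succeq \tfrac{\mu}{8}\eye$, so your averaging step goes through as written.
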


Since the objective function is convex, we may expect first-order optimization methods to be effective, since gradients can be efficiently computed with standard automatic differentiation libraries (as discussed in~\citet{maml}). In fact, this work provides the first set of results (under any assumptions) under which MAML-like objective function can be provably and efficiently optimized.

Another immediate corollary of our main theorem is that FTML now enjoys the same regret guarantees (up to constant factors) as FTL does in the comparable setting (with strongly convex losses).

\begin{corollary}
\label{cor:bound}
(inherited regret bound for FTML) Suppose that for all $t$,  $\fn_t$ and $\fnht_t$ satisfy assumptions 1 and 2. Suppose that the update procedure in FTML (Eq.~\ref{eq:ftml}) is chosen as \hbox{$\ud_t(\param) = \param - \alpha \nabla \fnht_t(\param)$} with $\alpha \leq \min \lbrace \frac{1}{2\beta}, \frac{\mu}{8 \rho G} \rbrace$. Then, FTML enjoys the following regret guarantee
\vspace{-0.15cm}
$$
\vspace{-0.15cm}
\sum_{t=1}^T \fn_t \big( \ud_t(\param_t) \big) - \min_{\param} \sum_{t=1}^T \fn_t \big( \ud_t(\param) \big) = \cO \bigg( \frac{32G^2}{\mu} \log T \bigg)
$$
\end{corollary}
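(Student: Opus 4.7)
The plan is to reduce this corollary directly to the standard follow-the-leader regret bound for strongly convex and Lipschitz losses, using Theorem~\ref{thm:convexity} (and its Corollary~1) to transfer the required structural properties from the originals $\fn_t$ to the composite meta-losses $\udfn_t(\param) \coloneqq \fn_t(\ud_t(\param))$. The key observation is that FTML played on $\{\fn_t\}$ is literally FTL played on $\{\udfn_t\}$: the update rule in Eq.~(\ref{eq:ftml}) is $\param_{t+1} = \arg\min_\param \sum_{\step=1}^t \udfn_\step(\param)$. So if each $\udfn_t$ is strongly convex and has bounded gradients, the classical logarithmic regret bound for FTL under strong convexity applies verbatim.

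First, I would invoke Theorem~\ref{thm:convexity} for each round $t$: since $\fn_t, \fnht_t$ satisfy Assumptions~1 and 2 and $\alpha \leq \min\{1/(2\beta),\, \mu/(8\rho G)\}$, each $\udfn_t$ is convex, $\tilde{\mu} = \mu/8$-strongly convex, and $\tilde{\beta} = 9\beta/8$-smooth. Next, I would establish a Lipschitz bound on $\udfn_t$: by the chain rule,
\[
\nabla \udfn_t(\param) = \bigl(\eye - \alpha\,\nabla^2 \fnht_t(\param)\bigr)\,\nabla \fn_t\bigl(\ud_t(\param)\bigr).
\]
Strong convexity and $\beta$-smoothness of $\fnht_t$ give $\zero \preceq \nabla^2 \fnht_t(\param) \preceq \beta\,\eye$, and combined with $\alpha \leq 1/(2\beta)$ this places the eigenvalues of $\eye - \alpha \nabla^2 \fnht_t(\param)$ in $[1/2, 1]$, so the operator norm is at most $1$. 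Together with $\|\nabla \fn_t\| \leq G$ from Assumption~1.1, this yields $\|\nabla \udfn_t(\param)\| \leq G$, i.e.\ $\udfn_t$ is $G$-Lipschitz.

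With both $\tilde{\mu}$-strong convexity and $G$-Lipschitzness in hand, I would invoke the standard FTL regret bound for strongly convex Lipschitz losses (e.g.\ Kakade--Shalev-Shwartz; Theorem~2.4 of \citet{ShaiBook}), which gives regret at most $\frac{c\,\tilde{L}^2}{\tilde{\mu}}(1 + \log T)$ for a small absolute constant $c$. Substituting $\tilde{L} = G$ and $\tilde{\mu} = \mu/8$ produces $\cO(G^2/\mu \cdot \log T)$ regret, matching the claimed $\cO(32 G^2/\mu \cdot \log T)$ bound once the universal FTL constant is tracked.

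There is really no hard step here: all of the genuine work — showing that embedding a gradient step inside a convex loss preserves convexity and strong convexity under the higher-order smoothness condition — has already been done in Theorem~\ref{thm:convexity}. The only mild check beyond a direct citation of FTL is the Lipschitz bound on $\udfn_t$, which is the short chain-rule calculation above. If any step demands care, it is simply making sure the choice of $\alpha$ is simultaneously small enough to (i) trigger Theorem~\ref{thm:convexity} and (ii) keep $\|\eye - \alpha \nabla^2 \fnht_t\| \leq 1$; both are ensured by $\alpha \leq 1/(2\beta)$, which is built into the hypothesis of the corollary.
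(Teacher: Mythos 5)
Your proposal is correct and takes essentially the same route as the paper: identify FTML on $\{\fn_t\}$ with FTL on the composite losses $\{\udfn_t\}$, invoke Theorem~\ref{thm:convexity} for $\tilde{\mu}=\mu/8$ strong convexity, and cite the standard logarithmic FTL regret bound. The one thing you add beyond the paper's proof is the explicit chain-rule verification that $\udfn_t$ is $G$-Lipschitz (via $\| \eye - \alpha \nabla^2 \fnht_t \| \leq 1$), a hypothesis the cited FTL theorem needs for the losses actually being played and which the paper leaves implicit -- a worthwhile check, and your calculation is correct.
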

\begin{proof}
From Theorem~\ref{thm:convexity}, we have that each function $\udfn_t(\param) = \fn_t (\ud_t(\param))$ is $\tilde{\mu} = \mu/8$ strongly convex. The FTML algorithm is identical to FTL on the sequence of loss functions $\lbrace \udfn_t \rbrace_{t=1}^T$, which has a $\cO(\frac{4G^2}{\tilde{\mu}} \log T)$ regret guarantee~(see \citet{CesaBianchi2006PLA} Theorem 3.1). Using $\tilde{\mu} = \mu/8$ completes the proof.
\end{proof}

More generally, our main theorem implies that there exists a large family of online meta-learning algorithms that enjoy sub-linear regret, based on the inherited smoothness and strong convexity of $\udfn(\cdot)$. See~\citet{HazanOCOBook,ShaiBook,ShalevShwartz2008MindTD} for algorithmic templates to derive sub-linear regret based algorithms.

\section{Practical Online Meta-Learning Algorithm}
\label{sec:real_algorithm}

\begin{figure*}[ttt!]
\begin{minipage}[t]{0.5\textwidth}
\begin{algorithm}[H]
\caption{Online Meta-Learning with FTML}
\label{alg:mamlonline}
\begin{algorithmic}[1]
{\footnotesize
\STATE {\bfseries Input:} Performance threshold of proficiency, $\gamma$
\STATE randomly initialize $\param_1$
\STATE initialize the task buffer as empty, $\buffer \leftarrow [~]$
\FOR{$t = 1,\dots$}
\STATE initialize $\data_{t} = \emptyset$
\STATE Add $\buffer \leftarrow \buffer + [~\task_t~]$ 
\WHILE{$| \data_{\task_t}| < N$}
\STATE Append batch of $n$ new datapoints $ \left\{ \left( \inp, \out \right) \right\}$ to $\data_{t}$
\STATE $\param_{t} \leftarrow $ \texttt{Meta-Update}$(\param_t, \buffer, t)$
\STATE $\udparam_t \leftarrow  \texttt{Update-Procedure}\left(\param_t, \data_{t} \right)$
\IF{ $\loss \left( \data^\text{test}_{t}, \udparam_t \right)< \gamma$}
\STATE Record efficiency for task $\task_t$ as $| \data_{t}|$ datapoints
\ENDIF
\ENDWHILE
\STATE Record final performance of $\udparam_t$ on test set $\data_t^\text{test}$ for task $t$.
\STATE $\param_{t+1} \leftarrow \param_t$
\ENDFOR
}
\end{algorithmic}
\end{algorithm}
\end{minipage}
\begin{minipage}[t]{0.5\textwidth}
\begin{algorithm}[H]
\caption{FTML Subroutines}
\label{alg:maml}
\begin{algorithmic}[1]
{\footnotesize
\STATE {\bfseries Input:} 
Hyperparameters parameters $\alpha, \eta$
\FUNCTION{\texttt{Meta-Update}$(\param, \buffer, t)$ \ \  }
\FOR{$n_\text{m} = 1,\dots, N_\text{meta} $ steps}
\STATE Sample task $\task_k$: $k \sim \nu^t(\cdot)$ {~~\em // (or a minibatch of tasks) }
\STATE Sample minibatches $\data_k^\text{tr}$, $\data_k^\text{val}$ uniformly from $\data_k$
\STATE Compute gradient $\mathbf{g}_t$ using $\data_k^\text{tr}$, $\data_k^\text{val}$, and Eq.~\ref{eq:practical_ftml_update}
\STATE Update parameters $\param \leftarrow \param - \eta \ \mathbf{g}_t$ \ \ {~\em // (or use Adam)}
\ENDFOR
    \STATE Return $\param$
\ENDFUNCTION
\FUNCTION{\texttt{Update-Procedure}$(\param$, $\data)$}
\STATE Initialize $\udparam \leftarrow \param$
\FOR{$n_\text{g} = 1,\dots, N_\text{grad} $ steps}
\STATE $\udparam \leftarrow \udparam - \alpha \nabla \cL(\data, \udparam)$
\ENDFOR
\STATE Return $\udparam$
\ENDFUNCTION
}
\end{algorithmic}
\end{algorithm}
\end{minipage}
\vspace{-0.3cm}
\end{figure*}

In the previous section, we derived a theoretically principled algorithm for convex losses. However, many problems of interest in machine learning and deep learning have a non-convex loss landscape, where theoretical analysis is challenging. Nevertheless, algorithms originally developed for convex losses like gradient descent and AdaGrad~\cite{duchi2011adaptive} have shown promising results in practical non-convex settings. Taking inspiration from these successes, we describe a practical instantiation of FTML in this section, and empirically evaluate its performance in Section~\ref{sec:experiments}.

The main considerations when adapting the FTML algorithm to few-shot supervised learning with high capacity neural network models are: (a) the optimization problem in Eq.~(\ref{eq:ftml}) has no closed form solution, and (b) we do not have access to the population risk $\fn_t$ but only a subset of the data. To overcome both these limitations, we can use iterative stochastic optimization algorithms. Specifically, by adapting the MAML algorithm~\cite{maml}, we can use stochastic gradient descent with a minibatch $\data_k^\text{tr}$ as the update rule, and stochastic gradient descent with an independently-sampled minibatch $\data_k^\text{val}$ to optimize the parameters. The gradient computation is described below:

\begin{equation} \label{eq:practical_ftml_update}
\begin{aligned}
    \mathbf{g}_t(\param) & = \nabla_\param \
    \mathbb{E}_{\step \sim \nu^t}
    \loss \big( \data_\step^{\text{val}}, \ud_\step(\param) \big), \ \text{ where} \\
    \ud_\step(\param) & \equiv \param - \alpha \ 
    \nabla_\param \ \loss \big( \data_\step^{\text{tr}}, \param \big)
\end{aligned}
\end{equation}

Here, $\nu^t(\cdot)$ denotes a sampling distribution for the previously seen tasks $\task_1, ..., \task_t$. In our experiments, we use the uniform distribution, $\nu^t \equiv P(k)=1/t~ \forall k=\{1,2,\ldots t\}$, but other sampling distributions can be used if required.
Recall that $\loss(\data, \param)$ is the loss function (e.g. cross-entropy) averaged over the datapoints $(\inp, \out) \in \data$ for the model with parameters $\param$. 
Using independently sampled minibatches $\data^\text{tr}$ and $\data^\text{val}$ minimizes interaction between the inner gradient update $\ud_t$ and the outer optimization (Eq.~\ref{eq:ftml}), which is performed using the gradient above~($\mathbf{g}_t$) in conjunction with Adam~\citep{adam}. While $\ud_t$ in Eq.~\ref{eq:practical_ftml_update} includes only one gradient step, we observed that it is beneficial to take multiple gradient steps in the inner loop (i.e., in $\ud_t$), which is consistent with prior works~\cite{maml, erin, antoniou2018train, Shaban2018TruncatedBF}.

Now that we have derived the gradient, the overall algorithm proceeds as follows. We first initialize a task buffer $\buffer=[~]$. When presented with a new task at round $t$, we add task $\task_t$ to $\buffer$ and initialize a task-specific dataset $\data_t=[~]$, which is appended to as data incrementally arrives for task $\task_t$. As new data arrives for task $\task_t$, we iteratively compute and apply the gradient in Eq.~\ref{eq:practical_ftml_update}, which uses data from all tasks seen so far. Once all of the data (finite-size) has arrived for $\task_t$, we move on to task $\task_{t+1}$. This procedure is further described in Algorithm~\ref{alg:mamlonline}, including the evaluation, which we discuss next.

To evaluate performance of the model at any point within a particular round $t$, we first update the model as using all of the data ($\data_{t}$) seen so far within round $t$. This is outlined in the \texttt{Update-Procedure} subroutine of Algorithm~\ref{alg:maml}. Note that this is different from the update $\mathbf{U}_t$ used within the meta-optimization, which uses a fixed-size minibatch since many-shot meta-learning is computationally expensive and memory intensive. In practice, we meta-train with update minibatches of size at-most $25$, whereas evaluation may use hundreds of datapoints for some tasks. After the model is updated, we measure the performance using held-out data $\data_t^\text{test}$ from task $\task_t$. This data is not revealed to the online meta-learner at any time. Further, we also evaluate task learning efficiency, which corresponds to  the size of $\data_t$ required to achieve a specified performance threshold $\gamma$, e.g. $\gamma=90\%$ classification accuracy or $\gamma$ corresponds to a certain loss value. If less data is sufficient to reach the threshold, then priors learned from previous tasks are being useful and we have achieved positive transfer.
\section{Experimental Evaluation}
\label{sec:experiments}

Our experimental evaluation studies the practical FTML algorithm (Section~\ref{sec:real_algorithm}) in the context of vision-based online learning problems. These problems include synthetic modifications of the MNIST dataset, pose detection with synthetic images based on PASCAL3D+ models~\cite{xiang2014beyond}, and realistic online image classification experiments with the CIFAR-100 dataset.
The aim of our experimental evaluation is to study the following questions:
(1) can online meta-learning (and specifically FTML) be successfully applied to multiple non-stationary learning problems? and
(2) does online meta-learning (FTML) provide empirical benefits over prior methods?

To this end, we compare to the following algorithms:
(a)~Train on everything (TOE) trains on all available data so far (including $\data_t$ at round $t$) and trains a single predictive model. This model is directly tested without any specific adaptation since it has already been trained on $\data_t$. (b) Train from scratch, which initializes $\param_t$ randomly, and finetunes it using $\data_t$. (c) Joint training with fine-tuning, which at round $t$, trains on all the data jointly till round $t-1$, and then finetunes it specifically to round $t$ using only $\data_t$. This corresponds to the standard online learning approach where FTL is used (without any meta-learning objective), followed by task-specific fine-tuning.

We note that TOE is a very strong point of comparison, capable of reusing representations across tasks, as has been proposed in a number of prior continual learning works~\cite{rusu2016progressive,aljundi2017expert,wang2017growing}.
However, unlike FTML, TOE does not explicitly learn the structure across tasks. Thus, it may not be able to fully utilize the information present in the data, and will likely not be able to learn new tasks with only a few examples.
Further, the model might incur negative transfer if the new task differs substantially from previously seen ones, as has been observed in prior work~\cite{actormimic}. Training on each task independently from scratch avoids negative transfer, but also precludes any reuse between tasks. When the amount of data for a given task is large, we may expect training from scratch to perform well since it can avoid negative transfer and can learn specifically for the particular task. Finally, FTL with fine-tuning represents a natural online learning comparison, which in principle should combine the best parts of learning from scratch and TOE, since this approach adapts specifically to each task \emph{and} benefits from prior data. However, in contrast to FTML, this method does not explicitly meta-learn and hence may not fully utilize any structure in the tasks.

\begin{figure*}[t]
\setlength{\unitlength}{0.5\columnwidth}
\begin{picture}(1.99,0.8) \linethickness{0.5pt}
\put(0.04,0.49){\includegraphics[height=0.08\linewidth]{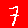}}
\put(0.37,0.49){\includegraphics[height=0.08\linewidth]{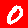}}
\put(0.70,0.49){\includegraphics[height=0.08\linewidth]{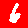}}
\put(1.03,0.49){\includegraphics[height=0.08\linewidth]{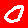}}
\put(1.41,0.49){\includegraphics[height=0.08\linewidth]{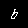}}
\put(1.74,0.49){\includegraphics[height=0.08\linewidth]{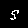}}
\put(2.07,0.49){\includegraphics[height=0.08\linewidth]{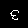}}
\put(2.40,0.49){\includegraphics[height=0.08\linewidth]{image0_bk1.png}}
\put(2.78,0.49){\includegraphics[height=0.08\linewidth]{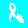}}
\put(3.11,0.49){\includegraphics[height=0.08\linewidth]{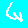}}
\put(3.44,0.49){\includegraphics[height=0.08\linewidth]{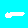}}
\put(3.77,0.49){\includegraphics[height=0.08\linewidth]{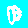}}

\put(0.04,-0.0){\includegraphics[height=0.107\linewidth]{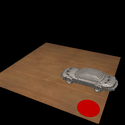}}
\put(0.48,-0.0){\includegraphics[height=0.107\linewidth]{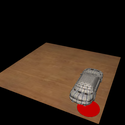}}
\put(0.92,-0.0){\includegraphics[height=0.107\linewidth]{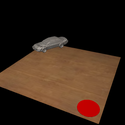}}
\put(1.41,-0.0){\includegraphics[height=0.107\linewidth]{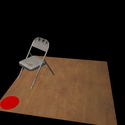}}
\put(1.85,-0.0){\includegraphics[height=0.107\linewidth]{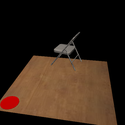}}
\put(2.29,-0.0){\includegraphics[height=0.107\linewidth]{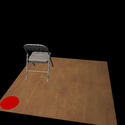}}
\put(2.78,-0.0){\includegraphics[height=0.107\linewidth]{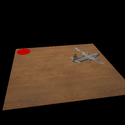}}
\put(3.22,-0.0){\includegraphics[height=0.107\linewidth]{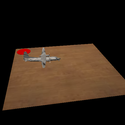}}
\put(3.66,-0.0){\includegraphics[height=0.107\linewidth]{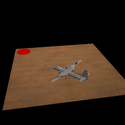}}

\end{picture}
\vspace{-0.3cm}
\caption{\small Illustration of three tasks for Rainbow MNIST (top) and pose prediction (bottom). CIFAR images not shown. Rainbow MNIST includes different rotations, scaling factors, and background colors.
For the pose prediction tasks, the goal is to predict the global position and orientation of the object on the table. Cross-task variation includes varying 50 different object models within 9 object classes, varying object scales, and different camera viewpoints.
\label{fig:tasks}
}
\end{figure*}

\begin{figure*}[t]
\setlength{\unitlength}{0.5\columnwidth}
\begin{picture}(1.99,1.1) \linethickness{0.5pt}
\put(0.0,0.0){\includegraphics[width=0.375\linewidth]{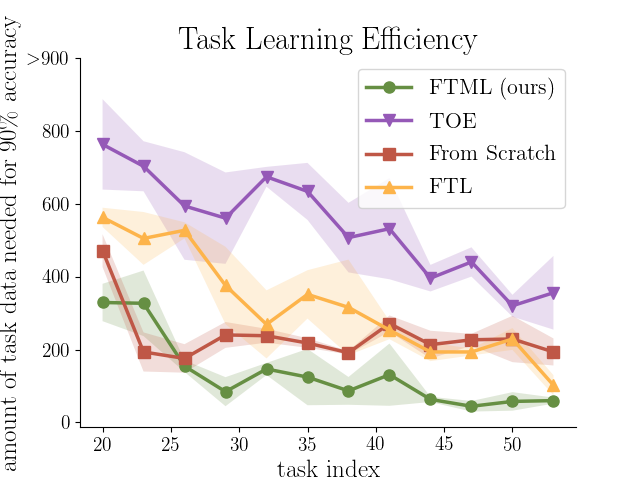}}
\put(1.38,0.0){\includegraphics[width=0.375\linewidth]{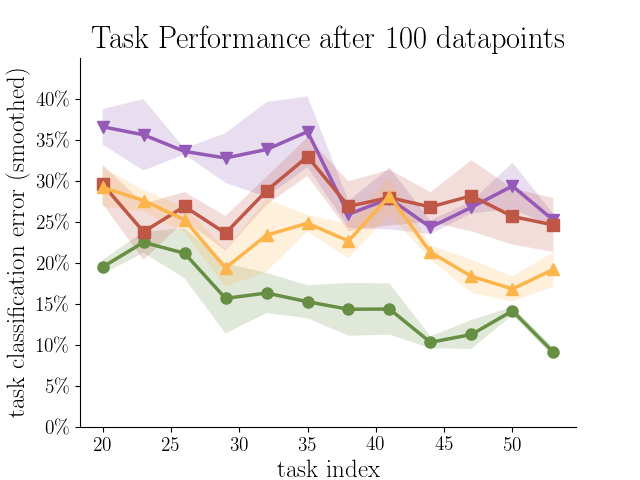}}
\put(2.75,0.0){\includegraphics[width=0.375\linewidth]{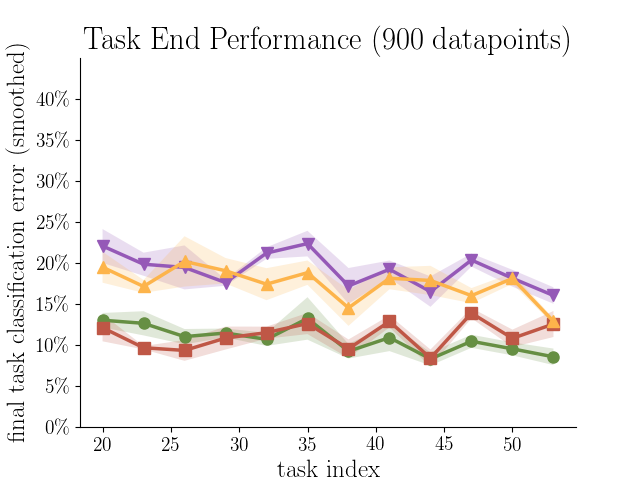}}
\end{picture}
\vspace{-0.3cm}
\caption{\small Rainbow MNIST results. Left: amount of data needed to learn each new task. Center: task performance after 100 datapoints on the current task. Right: The task performance after all 900 datapoints for the current task have been received. Lower is better for all plots, while shaded regions show standard error computed using three random seeds. FTML can learn new tasks more and more efficiently as each new task is received, demonstrating effective forward transfer.
\label{fig:rainbow}
}
\end{figure*}

\subsection{Rainbow MNIST}

In this experiment, we create a sequence of tasks based on the MNIST character recognition dataset. We transform the digits in a number of ways to create different tasks, such as 7 different colored backgrounds, 2 scales (half size and original size), and 4 rotations of 90 degree intervals. As illustrated in Figure~\ref{fig:tasks}, a task involves correctly classifying digits with a randomly sampled background, scale, and rotation. This leads to 56 total tasks. We partitioned the MNIST training dataset into 56 batches of examples, each with 900 images and applied the corresponding task transformation to each batch of images. The ordering of tasks was selected at random and we set $90\%$ classification accuracy as the proficiency threshold.

The learning curves in Figure~\ref{fig:rainbow} show that FTML learns tasks more and more quickly, with each new task added. We also observe that FTML substantially outperforms the alternative approaches in both efficiency and final performance. FTL performance better than TOE since it performs task-specific adaptation, but its performance is still inferior to FTML.
We hypothesize that, while the prior methods improve in efficiency over the course of learning as they see more tasks, they struggle to prevent negative transfer on each new task. Our last observation is that training independent models does not learn efficiently, compared to models that incorporate data from other tasks; but, their final performance with $900$ data points is similar.

\subsection{Five-Way CIFAR-100}

\begin{figure*}[t]
\setlength{\unitlength}{0.25\linewidth}
\begin{picture}(1.99,1.1) \linethickness{0.5pt}
\vspace{-0.2cm}
\put(0.0,0.0){\includegraphics[width=0.375\linewidth]{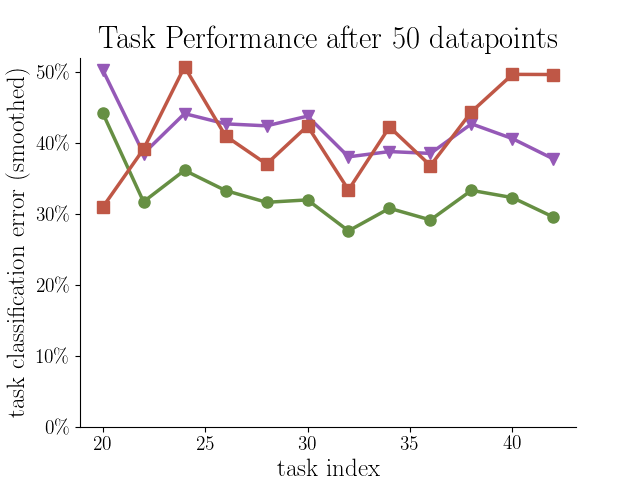}}
\put(1.33,0.0){\includegraphics[width=0.375\linewidth]{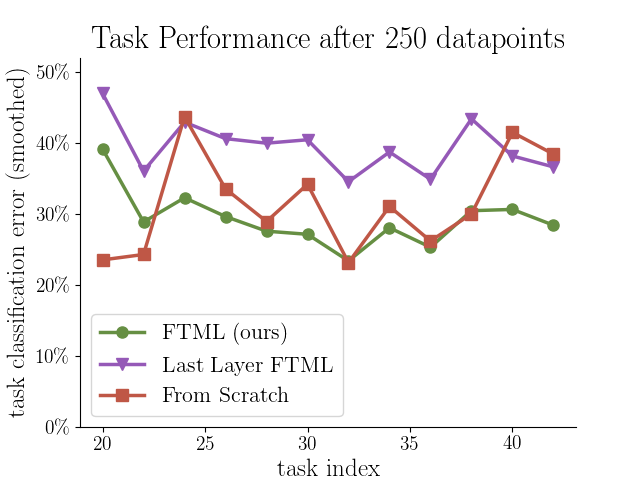}}
\put(2.675,0.0){\includegraphics[width=0.375\linewidth]{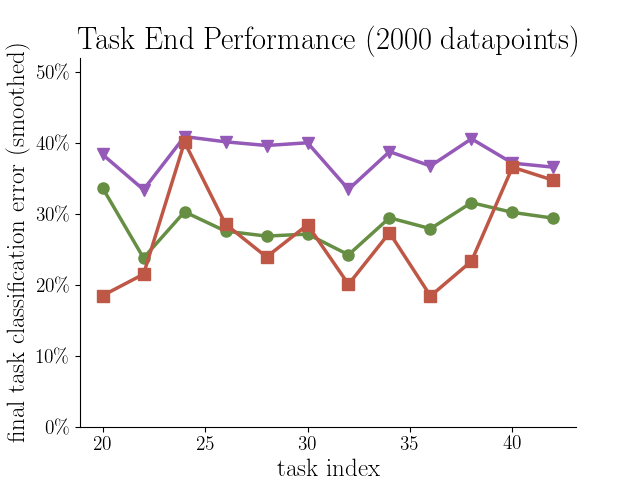}}
\end{picture}
\vspace{-0.3cm}
\caption{\small Online CIFAR-100 results, evaluating task performance after 50, 250, and 2000 datapoints have been received for a given task. We see that FTML learns each task much more efficiently than models trained from scratch, while both achieve similar asymptotic performance after 2000 datapoints. We also observe that FTML benefits from adapting all layers rather than learning a shared feature space across tasks while adapting only the last layer.
\label{fig:cifar}
}
\end{figure*}

\begin{figure*}[t]
\setlength{\unitlength}{0.25\linewidth}
\begin{picture}(1.99,1.1) \linethickness{0.5pt}
\put(0.0,0.0){\includegraphics[width=0.375\linewidth]{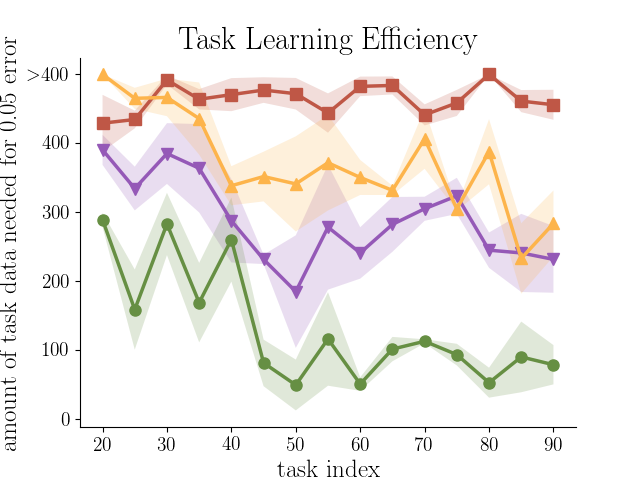}}
\put(1.325,0.0){\includegraphics[width=0.375\linewidth]{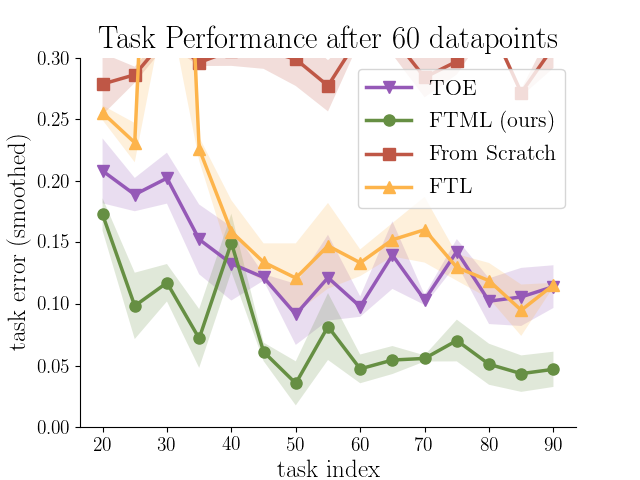}}
\put(2.655,0.0){\includegraphics[width=0.375\linewidth]{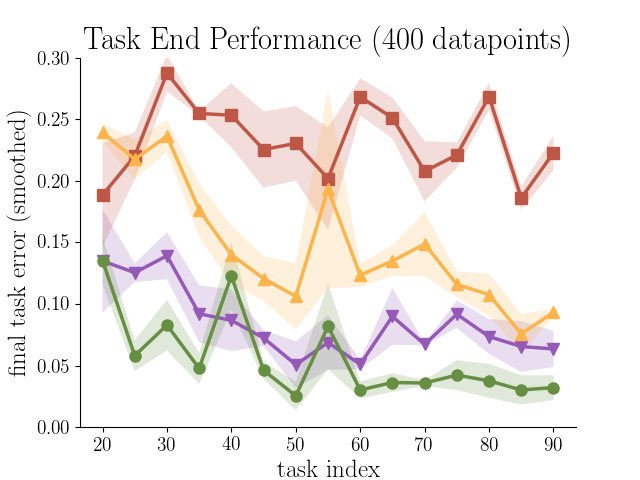}}
\end{picture}
\vspace{-0.3cm}
\caption{\small Object pose prediction results. Left: we observe that online meta-learning generally leads to faster learning as more and more tasks are introduced, learning with only $10$ datapoints for many of the tasks. Center \& right, we see that meta-learning enables transfer not just for faster learning but also for more effective performance when $60$ and $400$ datapoints of each task are available. The order of tasks is randomized, leading to spikes when more difficult tasks are introduced. Shaded regions show standard error across three random seeds 
\label{fig:pascal}
}
\end{figure*}

In this experiment, we create a sequence of 5-way classification tasks based on the CIFAR-100 dataset, which contains more challenging and realistic RGB images than MNIST. Each classification problem involves a newly-introduced class from the 100 classes in CIFAR-100.
Thus, different tasks  correspond to different labels spaces.
The ordering of tasks is selected at random, and we measure performance using classification accuracy. Since it is less clear what the proficiency threshold should be for this task, we evaluate the accuracy on each task after varying numbers of datapoints have been seen. Since these tasks are mutually exclusive (as label space is changing), it makes sense to train the TOE model with a different final layer for each task. An extremely similar approach to this is to use our meta-learning approach but to only allow the final layer parameters to be adapted to each task. Further, such a meta-learning approach is a more direct comparison to our full FTML method, and the comparison can provide insight into whether online meta-learning is simply learning features and performing training on the last layer, or if it is adapting the features to each task. Thus, we  compare to this last layer online meta-learning approach instead of TOE with multiple heads.
The results (see Figure~\ref{fig:cifar}) indicate that FTML learns more efficiently than independent models and a model with a shared feature space. The results on the right indicate that training from scratch achieves good performance with $2000$ datapoints, reaching similar performance to FTML. However, the last layer variant of FTML seems to not have the capacity to reach good performance on all tasks.

\subsection{Sequential Object Pose Prediction}

In our final experiment, we study a 3D pose prediction problem. Each task involves learning to predict the global position and orientation of an object in an image. We construct a dataset of synthetic images using $50$ object models from 9 different object classes in the PASCAL3D+ dataset~\cite{xiang2014beyond}, rendering the objects on a table using the renderer accompanying the MuJoCo~\cite{mujoco} (see Figure~\ref{fig:tasks}). To place an object on the table, we select a random 2D location, as well as a random azimuthal angle. Each task corresponds to a different object with a randomly sampled camera angle. 
We place a red dot on one corner of the table to provide a global reference point for the position.
Using this setup, we construct 90 tasks (with an average of about 2 camera viewpoints per object), with 1000 datapoints per task.
All models are trained to regress to the global 2D position and the sine and cosine of the azimuthal angle (the angle of rotation along the z-axis).
For the loss functions, we use mean-squared error, and set the proficiency threshold to an error of $0.05$. 
We show the results of this experiment in Figure~\ref{fig:pascal}. The results demonstrate that meta-learning can improve both efficiency and performance of new tasks over the course of learning, solving many of the tasks with only $10$ datapoints. Unlike the previous settings, TOE substantially outperforms training from scratch, indicating that it can effectively make use of the previous data from other tasks, likely due to the greater structural similarity between the pose detection tasks. 
However, the superior performance of FTML suggests that even better transfer can be accomplished through meta-learning. Finally, we find that FTL performs comparably or worse than TOE, indicating that task-specific fine-tuning can lead to overfitting when the model is not explicitly trained for the ability to be fine-tuned effectively.
\section{Connections to Related Work}
\label{sec:related_works}

Our work proposes to use meta-learning or learning to learn~\cite{thrun,schmidhuber1987,naik}, in the context of online (regret-based) learning. We reviewed the foundations of these approaches in Section~\ref{sec:foundations}, and we summarize additional related work along different axis.

\textbf{Meta-learning:} 
Prior works have proposed learning update rules, selective copying of weights, or optimizers for fast adaptation~\cite{hochreiter,bengiobengio1,learntolearnbygdbygd,learntooptimize,hugo,Schmidhuber2002OptimalOP, hypernets}, as well as recurrent models that learn by ingesting datasets directly~\cite{mann,rl2,learningrl,metanetworks,mishra2017meta}. 
While some 
meta-learning works have considered online learning settings at \emph{meta-test time}~\cite{mann,AlShedivat2017ContinuousAV,nagabandi2018deep}, nearly all prior meta-learning algorithms assume that the \emph{meta-training tasks} come from a stationary distribution. Furthermore, most prior work has not evaluated versions of meta-learning algorithms when presented with a continuous stream of tasks. One exception is work that adapts hyperparameters online~\cite{elfwing2017online,meier2017online,baydin2017online}. In contrast, we consider a more flexible approach that allows for adapting all of the parameters in the model for each task. More recent work has considered handling non-stationary task distributions in meta-learning using Dirichlet process mixture models over meta-learned parameters~\cite{erin_openreview}. Unlike this prior work, we introduce a simple extension onto the MAML algorithm without mixtures over parameters, and provide theoretical guarantees.

\textbf{Continual learning:}
Our problem setting is related to (but distinct from) continual, or lifelong learning~\cite{thrun1998lifelong,zhao1996incremental}. In lifelong learning, a number of recent papers have focused on avoiding forgetting and negative backward transfer~\cite{goodfellow2013empirical,kirkpatrick2017overcoming,zenke2017continual,rebuffi2017icarl,shin2017continual,shmelkov2017incremental,gradient_episodic,nguyen2017variational,Schmidhuber2013PowerPlayTA}. Other papers have focused on maintaining a reasonable model capacity as new tasks are added~\cite{lee2017lifelong,mallya2017packnet}. In this paper, we sidestep the problem of catastrophic forgetting by maintaining a buffer of all the observed data~\cite{isele2018selective}. In future work, we hope to understand the interplay between limited memory and catastrophic forgetting for variants of the FTML algorithm.
Here, we instead focuses on the problem of forward transfer: maximizing the efficiency of learning new tasks within a non-stationary learning setting. 
Prior works have also considered settings that combine joint training across tasks (in a sequential setting) with task-specific adaptation~\cite{Barto1995, Lowrey-ICLR-19}, but have not explicitly employed meta-learning.
Furthermore,
unlike prior works~\cite{ruvolo2013ella,rusu2016progressive,aljundi2017expert,wang2017growing}, we also focus on the setting where there are several tens or hundreds of tasks. This setting is interesting since there is significantly more information that can be transferred from previous tasks and we can employ more sophisticated techniques such as meta-learning for transfer, enabling the agent to move towards few-shot learning after experiencing a large number of tasks. 

\textbf{Online learning:} Similar to continual learning, online learning deals with a sequential setting with streaming tasks. It is well known in online learning that FTL is computationally expensive, with a large body of work dedicated to developing cheaper algorithms~\cite{CesaBianchi2006PLA, Hazan2006LogarithmicRA, Zinkevich2003OnlineCP, ShaiBook}. 
Again, in this work, we sidestep the computational considerations to first study if the meta-learning analog of FTL can provide performance gains. For this, we derived the FTML algorithm which has low regret when compared to a powerful adaptive comparator class that performs task-specific adaptation. We leave the design of more computationally efficient versions of FTML to future work.

To avoid the pitfalls associated with a single best model in hindsight, online learning literature has also studied alternate notions of regret, with the closest settings being dynamic regret and adaptive or tracking regret. In the dynamic regret setting~\citep{Herbster1995TrackingTB, Yang2016TrackingSM, Besbes2015NonstationarySO}, the performance of the online learner's model sequence is compared against the sequence of optimal solutions corresponding to each loss function in the sequence. Unfortunately, lower-bounds~\citep{Yang2016TrackingSM} suggest that the comparator class is too powerful and may not provide for any non-trivial learning in the general case. To overcome these limitations, prior work has placed restrictions on how quickly the loss functions or the comparator model can change~\citep{Hazan2009EfficientLA, Hall2015OnlineCO, Herbster1995TrackingTB}. In contrast, we consider a different notion of adaptive regret, where the learner and comparator both have access to an update procedure. The update procedures allow the comparator to produce different models for different loss functions, thereby serving as a powerful comparator class (in comparison to a fixed model in hindsight). For this setting, we derived sublinear regret algorithms without placing any restrictions on the sequence of loss functions.
Recent and concurrent works have also studied algorithms related to MAML and its first order variants using theoretical tools from online learning literature~\cite{Alquier2016RegretBF, Denevi2019LearningtoLearnSG, Khodak2019ProvableGF}. These works also derive regret and generalization bounds, but these algorithms have not yet been empirically studied in large scale domains or in non-stationary settings. 
We believe that our online meta-learning setting captures the spirit and practice of continual lifelong learning, and also shows promising empirical results.

\section{Discussion and Future Work}

In this paper, we introduced the online meta-learning problem statement, with the aim of connecting the fields of meta-learning and online learning. Online meta-learning provides, in some sense, a more natural perspective on the ideal real-world learning procedure: an intelligent agent interacting with a constantly changing environment should utilize streaming experience to both master the task at hand, and become more proficient at learning new tasks in the future. We analyzed the proposed FTML algorithm and showed that it achieves logarithmic regret. We then illustrated how FTML can be adapted to a practical algorithm. Our experimental evaluations demonstrated that the proposed algorithm outperforms prior methods.
In the rest of this section, we reiterate a few salient features of the online meta learning setting (Section~\ref{sec:problem_setting}), and outline avenues for future work.

\textbf{More powerful update procedures.} 
In this work, we concentrated our analysis on the case where the update procedure $\ud_t$, inspired by MAML, corresponds to one step of gradient descent. However, in practice, many works with MAML (including our experimental evaluation) use multiple gradient steps in the update procedure, and back-propagate through the entire path taken by these multiple gradient steps. Analyzing this case, and potentially higher order update rules will also make for exciting future work.

\textbf{Memory and computational constraints.} 
In this work, we primarily aimed to discern if it is possible to meta-learn in a sequential setting. For this purpose, we proposed the FTML template algorithm which draws inspiration from FTL in online learning. As discussed in Section~\ref{sec:related_works}, it is well known that FTL has poor computational properties, since the computational cost of FTL grows over time as new loss functions are accumulated. Further, in many practical online learning problems, it is challenging (and sometimes impossible) to store all datapoints from previous tasks. While we showed that our method can effectively learn nearly 100 tasks in sequence without significant burdens on compute or memory, scalability remains a concern. Can a more streaming algorithm like mirror descent that does not store all the past experiences be successful as well? Our main theoretical results (Section~\ref{sec:grad_step_analysis}) suggests that there exist a large family of online meta-learning algorithms that enjoy sublinear regret. Tapping into the large body of work in online learning, particularly mirror descent, to develop computationally cheaper algorithms would make for exciting future work.

\section*{Acknowledgements}
Aravind Rajeswaran thanks Emo Todorov for valuable discussions on the problem formulation. This work was supported by the National Science Foundation via IIS-1651843, Google, Amazon, and NVIDIA. Sham Kakade acknowledges funding from the Washington Research Foundation Fund for Innovation in Data-Intensive Discovery and the NSF CCF 1740551 award.

\bibliography{citations}
\bibliographystyle{icml2019}

\appendix
\onecolumn
\clearpage

\section{Linear Regression Example}
\label{app:quadratic_example}

Here, we present a simple example of optimizing a collection of quadratic objectives (equivalent to linear regression on fixed set of features), where the solutions to joint training and the meta-learning (MAML) problem are different. The purpose of this example is to primarily illustrate that meta-learning can provide performance gains even in seemingly simple and restrictive settings. Consider a collection of objective functions: $\{ \fn_i : \param \in \bR^d \rightarrow \bR \}_{i=1}^M$ which can be described by quadratic forms. Specifically, each of these functions are of then form
\[
\fn_i(\param) = \frac{1}{2} \param^T \Aq_i \param + \param^T \bq_i.
\]
This can represent linear regression problems as follows: let $(\inp_{\task_i}, \out_{\task_i})$ represent input-output pairs corresponding to task $\task_i$. Let the predictive model be $\bm{h}(\inp) = \param^T \inp$. Here, we assume that a constant scalar (say $1$) is concatenated in $\inp$ to subsume the constant offset term (as common in practice). Then, the loss function can be written as:
\[
\fn_i(\param) = \frac{1}{2} \bE_{(\inp, \out) \sim \task_i} \left[ || \bm{h}(\inp) - \out ||^2  \right]
\]
which corresponds to having $\Aq_i = \bE_{\inp \sim \task_i}[\inp \inp^T]$ and  \hbox{$\bq_i = \bE_{(\inp, \out) \sim \task_i}[\inp^T \out]$.} For these set of problems, we are interested in studying the difference between joint training and meta-learning.

\paragraph{Joint training} The first approach of interest is joint training which corresponds to the optimization problem
\begin{equation} \label{eq:quadratic_joint}
    \min_{\param \in \bR^d} F(\param) \, , \textrm{ where } F(\param) = \frac{1}{M} \sum_{i=1}^M \fn_i(\param).
\end{equation}
Using the form of $\fn_i$, we have
\[
F(\param) = \frac{1}{2} \ \param^T \left(  \frac{1}{M} \sum_{i=1}^M \Aq_i \right) \param + \param^T \left( \frac{1}{M} \sum_{i=1}^M \bq_i \right).
\]
Let us define the following:
\[
\bar{\Aq} := \frac{1}{M} \sum_{i=1}^M \Aq_i \ \text{ and } \ \bar{\bq} := \frac{1}{M} \sum_{i=1}^M \bq_i.
\]
The solution to the joint training optimization problem (Eq.~\ref{eq:quadratic_joint}) is then given by $\param_{\text{joint}}^* = - \bar{\Aq}^{-1} \bar{\bq}$.

\paragraph{Meta learning (MAML)} The second approach of interest is meta-learning, which as mentioned in Section~\ref{sec:maml_foundations} corresponds to the optimization problem:
\begin{equation} \label{eq:quadratic_maml}
    \min_{\param \in \bR^d} \tilde{F}(\param) \, , \textrm{ where } \tilde{F}(\param) = \frac{1}{M} \sum_{i=1}^M \fn_i(\ud_i(\param)).
\end{equation}
Here, we specifically concentrate on the 1-step (exact) gradient update procedure: $\ud_i(\param) = \param - \alpha \nabla \fn_i(\param)$. In the case of the quadratic objectives, this leads to:
\begin{equation*}
    \begin{split}
    \fn_i(\ud_i(\param)) &= \frac{1}{2}  (\param - \alpha \Aq_i \param - \alpha \bq_i)^T \Aq_i (\param - \alpha \Aq_i \param - \alpha \bq_i) \\
    & + (\param - \alpha \Aq_i \param - \alpha \bq_i)^T \bq_i
    \end{split}
\end{equation*}
The corresponding gradient can be written as:
\begin{equation*}
    \begin{split}
    \nabla \fn_i(\ud_i(\param)) &= \bigg( \eye - \alpha \Aq_i \bigg) \bigg( \Aq_i \big( \param - \alpha \Aq_i \param - \alpha \bq_i \big) + \bq_i \bigg) \\
    &= \big( \eye - \alpha \Aq_i \big) \Aq_i \big( \eye - \alpha \Aq_i \big) \param + \big( \eye - \alpha \Aq_i \big)^2 \bq_i
    \end{split}
\end{equation*}
For notational convenience, we define:
\begin{equation*}
    \begin{split}
    \Aq_\dagger & := \frac{1}{M} \sum_{i=1}^M \big( \eye - \alpha \Aq_i \big)^2 \Aq_i \\
    \bq_\dagger & := \frac{1}{M} \sum_{i=1}^M \big( \eye - \alpha \Aq_i \big)^2 \bq_i .
    \end{split}
\end{equation*}
Then, the solution to the MAML optimization problem (Eq.~\ref{eq:quadratic_maml}) is given by $\param_{\text{MAML}}^* = - \Aq_\dagger^{-1} \bq_\dagger$. 

\paragraph{Remarks} 
In general, \hbox{$\param_{\text{joint}}^* \neq \param_{\text{MAML}}^*$} based on our analysis. Note that $\Aq_\dagger$ is a weighed average of different $\Aq_i$, but the weights themselves are a function of $\Aq_i$. The reason for the difference between $\param_{\text{joint}}^*$ and $\param_{\text{MAML}}^*$ is the difference in moments of input distributions. The two solutions, $\param_{\text{joint}}^*$ and $\param_{\text{MAML}}^*$, coincide when $\Aq_i = \Aq \ \forall i$.  Furthermore, since $\param_{\text{MAML}}^*$ was optimized to explicitly minimize $\tilde{F}(\cdot)$, it would lead to better performance after task-specific adaptation. 

This example and analysis reveals that there is a clear separation in performance between joint training and meta-learning even in the case of quadratic loss functions. Improved performance with meta-learning approaches have been noted empirically with non-convex loss landscapes induced by neural networks. Our example illustrates that meta learning can provide non-trivial gains over joint training even in simple convex loss landscapes.

\clearpage
\section{Theoretical Analysis}
\label{app:proofs}
We first begin by reviewing some definitions and proving some helper lemmas before proving the main theorem.

\subsection{Notations, Definitions, and Some Properties} 
\begin{itemize}
    \item For a vector $\px \in \bR^d$, we use $\| \px \|$ to denote the $L2$ norm, i.e. $\sqrt{\px^T \px}$.
    \item For a matrix $\bm{A} \in \bR^{m \times d}$, we use $\| \bm{A} \|$ to denote the matrix norm induced by the vector norm,
    \[
    \| \bm{A} \| = \mathrm{sup} \bigg\{ \frac{\| \bm{A} \px \|}{\| \px \|} : \px \in \bR^d \ \mathrm{with} \ \| \px \| \neq 0  \bigg\}
    \]
    We briefly review two properties of this norm that are useful for us
    \begin{itemize}
        \item $\| \bm{A} + \bm{B} \| \leq \| \bm{A} \| + \| \bm{B} \|$ (triangle inequality)
        \item $\| \bm{A} \px \| \leq \| \bm{A} \| \| \px \|$ (sub-multiplicative property)
    \end{itemize}
    Both can be seen easily by recognizing that an alternate equivalent definition of the induced norm is $\| \bm{A} \| \geq \frac{\| \bm{A} \px \|}{\| \px \|} \ \forall \px$.
    \item For (symmetric) matrices $\bm{A} \in \bR^{d \times d}$, we use $\bm{A} \succeq \zero$ to denote the positive semi-definite nature of the matrix. Similarly, $\bm{A} \succeq \bm{B}$ implies that $\px^T (\bm{A} - \bm{B}) \px \geq 0 \ \forall \px $.
\end{itemize} 

\begin{definition}
A function $f(\cdot)$ is $G-${\bf Lipschitz continuous} iff:
\[
\| \fn(\px) - \fn(\py) \| \leq G \| \px - \py \| \hspace*{10pt} \forall \ \px, \py
\]
\end{definition}

\begin{definition}
A differentiable function $\fn(\cdot)$ is $\beta-${\bf smooth} (or $\beta-${\bf gradient Lipschitz}) iff:
\[
\| \nabla \fn(\px) - \nabla \fn(\py) \| \leq \beta \| \px - \py \| \hspace*{10pt} \forall \ \px, \py
\]
\end{definition}

\begin{definition}
A twice-differentiable function $\fn(\cdot)$ is $\rho-${\bf Hessian Lipschitz} iff:
\[
\| \nabla^2 \fn(\px) - \nabla^2 \fn(\py) \| \leq \rho \| \px - \py \| \hspace*{10pt} \forall \  \px, \py
\]
\end{definition}

\begin{definition}
A twice-differentiable function $\fn(\cdot)$ is $\mu-${\bf strongly convex} iff:
\[
\nabla^2 \fn(\px) \succeq \mu \eye \hspace*{10pt} \forall \ \px
\]
\end{definition}

We also review some properties of convex functions that help with our analysis.
\begin{lemma}
(Elementery properties of convex functions; see e.g. \citet{BoydBook,ShaiBook})
\begin{enumerate}
    \itemsep0em
    \item If $\fn(\cdot)$ is differentiable, convex, and $G-$Lipschitz continuous, then 
    $$ \| \nabla \fn(\px) \| \leq G \ \ \forall \ \px.$$
    \item If $\fn(\cdot)$ is twice-differentiable, convex, and $\beta-$smooth, then
    $$ \nabla^2 \fn(\px) \preceq \beta \eye \ \ \forall \ \px. $$
    This is also equivalent to $\| \nabla^2 \fn(\px) \| \leq \beta \ \ \forall \ \px$.
    The strong convexity also implies that
    \[
    \| \fn(\px) - \fn(\py) \| \geq \mu \| \px-\py \| \hspace*{10pt} \forall \ \px, \py.
    \]
\end{enumerate}
\end{lemma}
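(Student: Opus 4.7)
The plan is to observe that each assertion reduces a Lipschitz-type finite-difference inequality to its infinitesimal (derivative) counterpart, and that each such reduction is a standard consequence of the stated differentiability / $C^2$ hypotheses.

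For the first claim, I would apply the $G$-Lipschitz bound to the pair $(\px + t v,\, \px)$ for an arbitrary unit vector $v \in \bR^d$ and scalar $t > 0$, obtaining
\[
\frac{|\fn(\px + t v) - \fn(\px)|}{t} \;\leq\; G.
\]
Sending $t \downarrow 0$ and invoking differentiability of $\fn$ identifies the limit as $|v^\top \nabla \fn(\px)|$, so $|v^\top \nabla \fn(\px)| \leq G$ for every unit $v$. Plugging in $v = \nabla \fn(\px)/\|\nabla \fn(\px)\|$ (when the gradient is nonzero; the other case is trivial) yields $\|\nabla \fn(\px)\| \leq G$. Note that convexity is not actually needed here; the argument uses only differentiability and the Lipschitz hypothesis.

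For the second claim, I would repeat the template one derivative up. The $\beta$-smoothness hypothesis is precisely Lipschitz continuity of the vector field $\nabla \fn$ with constant $\beta$, so the same finite-difference argument applied to $\nabla \fn$ at the pair $(\px + t v,\, \px)$ gives, after passing to the limit, $\|\nabla^2 \fn(\px)\, v\| \leq \beta \|v\|$ for every $v$. Hence the induced matrix norm satisfies $\|\nabla^2 \fn(\px)\| \leq \beta$. Because $\nabla^2 \fn(\px)$ is symmetric, this operator-norm bound is equivalent, via spectral decomposition, to the PSD sandwich $-\beta\,\eye \preceq \nabla^2 \fn(\px) \preceq \beta\,\eye$; combining with the convexity-implied lower bound $\nabla^2 \fn(\px) \succeq \zero$ yields $\zero \preceq \nabla^2 \fn(\px) \preceq \beta\,\eye$, as claimed.

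For the final assertion — read, consistently with Assumption~2 of the paper, as $\|\nabla \fn(\px) - \nabla \fn(\py)\| \geq \mu \|\px - \py\|$ — I would invoke Taylor's theorem with integral remainder applied to $\nabla \fn$:
\[
\nabla \fn(\px) - \nabla \fn(\py) \;=\; \Bigl(\int_0^1 \nabla^2 \fn\bigl(\py + s(\px - \py)\bigr)\, ds \Bigr)(\px - \py).
\]
The integrand is $\succeq \mu\,\eye$ pointwise by $\mu$-strong convexity (in the Hessian form given in the Definition), so the integrated matrix $\bm{M}$ is symmetric with smallest eigenvalue at least $\mu$; the variational characterization of the minimum eigenvalue then gives $\|\bm{M}(\px - \py)\| \geq \mu \|\px - \py\|$, which is exactly the desired inequality. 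The only step requiring mild care is this use of the integral form of Taylor's theorem rather than a scalar mean value theorem (which does not apply componentwise to a vector-valued map); everything else is a routine limit argument enabled by the regularity hypotheses, so I do not expect any real obstacle.
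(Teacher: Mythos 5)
Your proof is correct. The paper itself offers no proof of this lemma---it is cited as standard from \citet{BoydBook,ShaiBook}---and your three arguments (the finite-difference/limit argument for the gradient bound, the same argument one derivative up combined with symmetry of the Hessian and $\nabla^2 \fn \succeq \zero$ from convexity for the smoothness bound, and the integral form of Taylor's theorem for $\nabla \fn$ together with $\int_0^1 \nabla^2\fn(\py+s(\px-\py))\,ds \succeq \mu\eye$ for the strong-convexity lower bound) are exactly the standard derivations one finds in those references. You also correctly diagnose the typo in the third item (the displayed inequality should read $\|\nabla\fn(\px)-\nabla\fn(\py)\| \geq \mu\|\px-\py\|$, matching Assumption~2) and rightly note that convexity is not needed for the first claim, so there is nothing to fix.
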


\subsection{Descent Lemma}
In this sub-section, we prove a useful lemma about the dynamics of gradient descent and its contractive property. For this, we first consider a lemma that provides an inequality analogous to that of mean-value theorem.

\begin{lemma}
\label{lemma:line_integral}
(Fundamental theorem of line integrals)
Let $\varphi : \bR^d \rightarrow \bR$ be a differentiable function. Let $\px, \py \in \bR^d$ be two arbitrary points, and let $\bm{r}(\gamma)$ be a curve from $\px$ to $\py$, parameterized by scalar $\gamma$. Then,
\[
\varphi(\py) - \varphi(\px) = \int_{\gamma[\px, \py]} \nabla \varphi(\bm{r}(\gamma)) d \bm{r} = \int_{\gamma[\px, \py]} \nabla \varphi(\bm{r}(\gamma)) \bm{r}'(\gamma) d\gamma
\]
\end{lemma}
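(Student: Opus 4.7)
\textbf{Proof plan for Lemma \ref{lemma:line_integral}.} The statement is the standard fundamental theorem of calculus along a curve, so the plan is to reduce it to the one-dimensional fundamental theorem of calculus via the chain rule. Concretely, I would parametrize the curve by $\gamma$ over some interval $[a,b]$ with $\bm{r}(a) = \px$ and $\bm{r}(b) = \py$, and assume that $\bm{r}$ is (at least piecewise) $C^1$ so that $\bm{r}'(\gamma)$ is well defined; this is essentially a regularity assumption on the curve, not an additional assumption on $\varphi$.

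The key step is to introduce the scalar auxiliary function
\[
g(\gamma) \coloneqq \varphi(\bm{r}(\gamma)), \qquad \gamma \in [a,b].
\]
Since $\varphi$ is differentiable on $\bR^d$ and $\bm{r}$ is differentiable on $[a,b]$, the chain rule gives that $g$ is differentiable with
\[
g'(\gamma) = \nabla \varphi(\bm{r}(\gamma))^\top \bm{r}'(\gamma).
\]
This is the only nontrivial analytic ingredient, and it is entirely standard.

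Next, I would invoke the (single-variable) fundamental theorem of calculus applied to $g$ on $[a,b]$:
\[
g(b) - g(a) = \int_a^b g'(\gamma)\, d\gamma.
\]
Substituting $g(a) = \varphi(\px)$, $g(b) = \varphi(\py)$, and the chain-rule expression for $g'(\gamma)$ yields
\[
\varphi(\py) - \varphi(\px) = \int_a^b \nabla \varphi(\bm{r}(\gamma))^\top \bm{r}'(\gamma)\, d\gamma,
\]
which is exactly the second form of the claimed identity, while the first form is just the standard shorthand $d\bm{r} = \bm{r}'(\gamma)\, d\gamma$ for line integrals.

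The main obstacle is really only bookkeeping: ensuring the parametrization of $\bm{r}$ is regular enough for $g'$ to exist and be integrable. If $\bm{r}$ is only piecewise $C^1$, the argument is applied on each smooth piece and the integrals are summed; the endpoint contributions telescope, recovering the same identity. No other difficulty arises, since neither convexity nor smoothness beyond differentiability of $\varphi$ is needed for this lemma.
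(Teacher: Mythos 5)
Your proof is correct: the paper states this lemma without proof (treating it as a standard calculus fact), and your argument---composing $\varphi$ with the parametrization to get $g(\gamma)=\varphi(\bm{r}(\gamma))$, applying the chain rule, and then invoking the single-variable fundamental theorem of calculus---is exactly the canonical derivation. Your remark about requiring the curve to be (piecewise) $C^1$ is apt, and is harmless here since the paper only ever applies the lemma to straight line segments $\pz(t)=\py+t(\px-\py)$.
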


\begin{lemma}
\label{lemma:mean_value_ineq}
(Mean value inequality)
Let $\bfn : \px \in \bR^d \rightarrow \bR^m$ be a differentiable function. Let $\jacobian\bfn$ be the function Jacobian, such that $\jacobian\bfn_{i,j} = \frac{\partial \varphi_i}{\partial \px_j}$. Let $M = \max_{\px \in \bR^d} \| \jacobian\bfn(\px) \|$. Then, we have
\[
\| \bfn(\px) - \bfn(\py) \| \leq M \ \| \px - \py \| \hspace*{10pt} \forall \ \px, \py \in \bR^d
\]
\end{lemma}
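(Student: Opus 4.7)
The plan is to apply the fundamental theorem of line integrals (Lemma~\ref{lemma:line_integral}) to each scalar component of $\bfn$, assemble the resulting vector identity, and then bound its norm using the sub-multiplicative property of the induced matrix norm together with the uniform bound $M$ on the Jacobian.

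Concretely, I would first parameterize the straight-line segment from $\px$ to $\py$ by $\bm{r}(\gamma) = \px + \gamma(\py - \px)$ for $\gamma \in [0,1]$, so that $\bm{r}(0) = \px$, $\bm{r}(1) = \py$, and $\bm{r}'(\gamma) = \py - \px$. Applying Lemma~\ref{lemma:line_integral} to each coordinate function $\varphi_i$ of $\bfn$ and stacking the results yields the vector identity
\[
\bfn(\py) - \bfn(\px) \;=\; \int_0^1 \jacobian\bfn\bigl(\bm{r}(\gamma)\bigr)\,(\py - \px)\,d\gamma.
\]

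Next, I would take norms on both sides and pull the norm inside the integral using the standard triangle inequality for vector-valued integrals:
\[
\bigl\|\bfn(\py) - \bfn(\px)\bigr\| \;\leq\; \int_0^1 \bigl\|\jacobian\bfn\bigl(\bm{r}(\gamma)\bigr)\,(\py - \px)\bigr\|\,d\gamma.
\]
Then I would apply the sub-multiplicative property of the induced matrix norm (reviewed just before the lemma) to get $\|\jacobian\bfn(\bm{r}(\gamma))(\py-\px)\| \leq \|\jacobian\bfn(\bm{r}(\gamma))\|\,\|\py-\px\|$, and use the hypothesis $\|\jacobian\bfn(\bm{r}(\gamma))\| \leq M$ uniformly in $\gamma$. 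Since $\|\py - \px\|$ is independent of $\gamma$, the integral collapses to $M\,\|\py - \px\|\cdot \int_0^1 d\gamma = M\,\|\py - \px\|$, which is the desired conclusion.

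There is no genuine obstacle here; the only subtle point is justifying the vector-valued triangle inequality $\|\int_0^1 \bm{v}(\gamma)\,d\gamma\| \leq \int_0^1 \|\bm{v}(\gamma)\|\,d\gamma$, which follows from applying Cauchy--Schwarz against an arbitrary unit vector $\bm{u}$ (writing $\bm{u}^T \int \bm{v}\,d\gamma = \int \bm{u}^T \bm{v}\,d\gamma \leq \int \|\bm{v}\|\,d\gamma$) and then taking the supremum over $\bm{u}$. Continuity of $\gamma \mapsto \jacobian\bfn(\bm{r}(\gamma))$ on the compact segment ensures the integrals are well defined, so the argument is otherwise a direct chain of inequalities.
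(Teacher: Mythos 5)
Your proof is correct and follows essentially the same route as the paper's: parameterize the straight segment, apply the line-integral lemma coordinatewise to obtain $\bfn(\py)-\bfn(\px)=\int_0^1 \jacobian\bfn(\bm{r}(\gamma))(\py-\px)\,d\gamma$, pull the norm inside the integral, and finish with the sub-multiplicative property and the uniform bound $M$. Your explicit justification of the vector-valued integral inequality is the same step the paper labels as its Cauchy--Schwartz step, just spelled out in more detail.
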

\begin{proof}
Let the line segment connecting $\px$ and $\py$ be parameterized as $\pz(t) = \py + t (\px-\py), \ t \in [0,1]$. Using Lemma~\ref{lemma:line_integral} for each coordinate, we have that
\[
\bfn(\px) - \bfn(\py) = \left( \int_{t=0}^{t=1} \jacobian\bfn \big(\pz(t)\big) dt \right) \bigg( \pz(1) - \pz(0) \bigg) = \left( \int_{t=0}^{t=1} \jacobian\bfn \big(\pz(t)\big) dt \right) \bigg( \px - \py \bigg)
\]
\begin{align*}
    \| \bfn(\px) - \bfn(\py) \| & = \left\| \int_{t=0}^{t=1} \jacobian\bfn \big(\pz(t)\big) \big( \px-\py \big) dt \right\| \\
    & \overset{(a)}{\leq} \int_{t=0}^{t=1} \left\| \jacobian\bfn \big(\pz(t)\big) \big( \px-\py \big) \right\| dt \\
    & \overset{(b)}{\leq} \int_{t=0}^{t=1} \left\| \jacobian\bfn \big(\pz(t)\big) \right\| \left\| \big( \px-\py \big) \right\| dt \\
    & \overset{(c)}{\leq} \int_{t=0}^{t=1} M \left\| \px-\py \right\| dt \\
    & = M \| \px - \py \| \int_{t=0}^{t=1} dt = M \|\px - \py \|
\end{align*}
Step (a) follows from the Cauchy-Schwartz inequality. Step (b) is a consequence of the sub-multiplicative property. Finally, step (c) is based on the definition of $M$.
\end{proof}

\begin{lemma}
\label{lemma:gd_contraction}
(Contractive property of gradient descent)
Let $\varphi : \bR^d \rightarrow \bR$ be an arbitrary function that is $G-$Lipschitz, $\beta-$smooth, and $\mu-$strongly convex. Let $\ud(\px) = \px - \alpha \nabla \varphi(\px)$ be the gradient descent update rule with $\alpha \leq \frac{1}{\beta}$. Then,
\[
\| \ud(\px) - \ud(\py) \|  \leq (1-\alpha \mu) \| \px - \py \| \hspace*{5pt} \forall \ \px, \py \in \bR^d.
\]
\end{lemma}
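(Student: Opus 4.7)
The plan is to apply the Mean Value Inequality (Lemma~\ref{lemma:mean_value_ineq}) to the vector-valued map $\ud : \bR^d \to \bR^d$ defined by $\ud(\px) = \px - \alpha \nabla \varphi(\px)$. Its Jacobian is $\jacobian \ud(\px) = \eye - \alpha \nabla^2 \varphi(\px)$, so the key quantity to control is $M \coloneqq \sup_\px \| \eye - \alpha \nabla^2 \varphi(\px) \|$. Once we bound $M \leq 1 - \alpha \mu$, Lemma~\ref{lemma:mean_value_ineq} immediately yields the claimed contraction.

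First I would invoke the twice-differentiability of $\varphi$ together with $\beta$-smoothness and $\mu$-strong convexity to conclude that $\mu \eye \preceq \nabla^2 \varphi(\px) \preceq \beta \eye$ for every $\px$, using the elementary properties of convex functions recalled just before the descent lemma. Consequently the symmetric matrix $\eye - \alpha \nabla^2 \varphi(\px)$ has eigenvalues lying in $[1 - \alpha \beta, \, 1 - \alpha \mu]$. Here the step-size restriction $\alpha \leq 1/\beta$ becomes essential: it guarantees $1 - \alpha \beta \geq 0$, so both endpoints are nonnegative and the spectral norm equals the larger endpoint, giving $\| \eye - \alpha \nabla^2 \varphi(\px) \| \leq 1 - \alpha \mu$ uniformly in $\px$.

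Next, I would identify $\varphi \mapsto \ud$ as a differentiable vector field with $\jacobian \ud(\px) = \eye - \alpha \nabla^2 \varphi(\px)$ and apply Lemma~\ref{lemma:mean_value_ineq} with $M = 1 - \alpha \mu$, obtaining
\[
\| \ud(\px) - \ud(\py) \| \leq M \, \| \px - \py \| \leq (1 - \alpha \mu) \, \| \px - \py \|
\]
for all $\px, \py$, which is precisely the statement of the lemma.

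The argument is largely routine once the right tool (the mean-value inequality through an integrated Jacobian) is in hand, so the only mild subtlety is the bookkeeping on eigenvalues: one must be careful to use the step-size restriction $\alpha \leq 1/\beta$ to rule out the regime where $1 - \alpha \beta$ would be negative and the spectral norm would instead be dominated by $|1 - \alpha \beta|$. Note that the $G$-Lipschitz hypothesis is not actually used in this lemma; it is stated for uniformity with the ambient smoothness/convexity setup of Assumptions~1 and~2 and will be needed in the downstream use of this lemma inside the proof of Theorem~\ref{thm:convexity}.
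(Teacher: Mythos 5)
Your proof is correct and follows essentially the same route as the paper's: compute the Jacobian $\jacobian\ud(\px) = \eye - \alpha \nabla^2 \varphi(\px)$, bound its spectrum in $[1-\alpha\beta,\, 1-\alpha\mu]$ using $\mu \eye \preceq \nabla^2\varphi \preceq \beta\eye$, and invoke Lemma~\ref{lemma:mean_value_ineq} with $M = 1-\alpha\mu$. If anything you are slightly more careful than the paper, which does not explicitly note that $\alpha \leq 1/\beta$ is what keeps $1-\alpha\beta \geq 0$ and hence makes $1-\alpha\mu$ (rather than $|1-\alpha\beta|$) the operative bound on the spectral norm.
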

\begin{proof}
Firstly, the Jacobian of $\ud(\cdot)$ is given by $ \jacobian\ud (\px) = \eye - \alpha \nabla^2 \varphi(\px) $. Since $\mu \eye \preceq \nabla^2 \varphi(\px) \preceq \beta \eye \ \forall \px \in \bR^d$, we can bound the Jacobian as
\[
(1-\alpha \beta) \eye \preceq \jacobian \ud(\px) \preceq (1-\alpha \mu) \eye \hspace*{10pt} \forall \ \px \in \bR^d
\]
The upper bound also implies that $\| \jacobian\ud(\px) \| \leq (1-\alpha \mu) \ \forall \px$. Using this and invoking Lemma~\ref{lemma:mean_value_ineq} on $\ud(\cdot)$, we get that
\[
\| \ud(\px) - \ud(\py) \| \leq (1-\alpha \mu) \| \px-\py \|.
\]
\end{proof}

\subsection{Main Theorem}
Using the previous lemmas, we can prove our main theorem. We restate the assumptions and statement of the theorem and then present the proof.

\noindent \textbf{Assumption 1.}
\textit{
($C^2$-smoothness) Suppose that $\fn(\cdot)$ is twice differentiable and
\begin{itemize}
    \item $\fn(\cdot)$ is $G-$Lipschitz in function value.
    \item $\fn(\cdot)$ is $\beta-$smooth, or has $\beta-$Lipschitz gradients.
    \item $\fn(\cdot)$ has $\rho-$Lipschitz hessian.
\end{itemize}
}

\noindent \textbf{Assumption 2.}
\textit{(Strong convexity)} Suppose that $\fn(\cdot)$ is convex. Further suppose that it is $\mu-$strongly convex.
 
\noindent \textbf{Theorem 1.}
\textit{
Suppose $\fn$ and $\fnht: \bR^d \rightarrow \bR$ satisfy assumptions 1 and 2. Let $\udfn$ be the function evaluated after a one step gradient update procedure, i.e.
}
$$
\udfn(\param) \coloneqq \fn \big( \param - \alpha \nabla \fnht(\param) \big).
$$
\textit{If the step size is selected as $\alpha \leq \min \{ \frac{1}{2\beta}, \frac{\mu}{8 \rho G} \}$, then $\udfn$ is convex. Furthermore, it is also $\tilde{\beta}=9\beta/8$ smooth and $\tilde{\mu} = \mu/8$ strongly convex.}

\begin{proof}
Consider two arbitrary points $\px, \py \in \bR^d$. Using the chain rule and our shorthand of $\udpx \equiv \ud(\px), \udpy \equiv \ud(\px)$, we have
\begin{align*}
    \nabla \udfn(\px) - \nabla \udfn(\py) & = \nabla \ud(\px) \nabla \fn(\udpx) - \nabla \ud(\py) \nabla \fn(\udpy) \\
    & = \left( \nabla \ud(\px) - \nabla \ud(\py) \right) \nabla \fn(\udpx) + \nabla \ud(\py) \left( \nabla \fn(\udpx) - \nabla \fn(\udpy) \right).
\end{align*}
We first show the smoothness of the function. Taking the norm on both sides, for the specified $\alpha$, we have:
\begin{align*}
\| \nabla \udfn(\px) - \nabla \udfn(\py) \| & = \| \left( \nabla \ud(\px) - \nabla \ud(\py) \right) \nabla \fn(\udpx) + \nabla \ud(\py) \left( \nabla \fn(\udpx) - \nabla \fn(\udpy) \right) \| \\
& \leq \| \left( \nabla \ud(\px) - \nabla \ud(\py) \right) \nabla \fn(\udpx) \| + \| \nabla \ud(\py) \left( \nabla \fn(\udpx) - \nabla \fn(\udpy) \right) \|
\end{align*}
due to triangle inequality. We now bound both the terms on the right hand side. We have
\begin{align*}
    \| \left( \nabla \ud(\px) - \nabla \ud(\py) \right) \nabla \fn(\udpx) \| & \overset{(a)}{\leq} \| \nabla \ud(\px) - \nabla \ud(\py) \| \| \nabla \fn(\udpx) \| \\
    & = \| \left( \eye - \alpha \nabla^2 \fnht(\px) \right) - \left( \eye - \alpha \nabla^2 \fnht(\py) \right) \| \| \nabla \fn(\udpx) \| \\
    & = \alpha \| \nabla^2 \fnht(\px) - \nabla^2 \fnht(\py) \| \| \nabla \fn(\udpx) \| \\
    & \overset{(b)}{\leq} \alpha \rho \| \px - \py \| \| \nabla \fn(\udpx) \| \\
    & \overset{(c)}{\leq} \alpha \rho G \| \px - \py \|
\end{align*}
where (a) is due to the sub-multiplicative property of norms (Cauchy-Schwarz inequality), (b) is due to the Hessian Lipschitz property, and (c) is due to the Lipschitz property in function value. Similarly, we can bound the second term as
\begin{align*}
    \| \nabla \ud(\py) \left( \nabla \fn(\udpx) - \nabla \fn(\udpy) \right) \| & = \left\| \left( \eye - \alpha \nabla^2 \fnht(\py) \right) \Big( \nabla \fn(\udpx) - \nabla \fn(\udpy) \Big) \right\|
    \\
    & \overset{(a)}{\leq} (1-\alpha \mu) \| \nabla \fn(\udpx) - \nabla \fn(\udpy) \| \\
    & \overset{(b)}{\leq} (1-\alpha \mu) \beta \| \udpx - \udpy \| \\
    & \overset{(c)}{=} (1-\alpha \mu) \beta \| \ud(\px) - \ud(\py) \| \\
    & \overset{(d)}{\leq} (1-\alpha \mu) \beta (1-\alpha \mu) \| \px - \py \| \\
    & = (1-\alpha \mu)^2 \beta \|\px-\py\|
\end{align*}
Here, (a) is due to $\eye - \alpha \nabla^2 \fnht(\py)$ being symmetric, PSD, and $\lambda_{\max} \Big( \eye - \alpha \nabla^2 \fnht(\py) \Big) \leq 1-\alpha \mu$. Step (b) is due to smoothness of $\fn(\cdot)$. Step (c) is simply using our shorthand of $\udpx \equiv \ud(\px), \udpy \equiv \ud(\px)$. Finally, step (d) is due to Lemma~\ref{lemma:gd_contraction} on $\ud(\cdot)$.

Putting the previous pieces together, when $\alpha \leq \min \Big\{ \frac{1}{2\beta}, \frac{\mu}{8 \rho G} \Big\}$, we have that
\begin{align*}
\| \nabla \udfn(\px) - \nabla \udfn(\py) \| 
& \leq \| \left( \nabla \ud(\px) - \nabla \ud(\py) \right) \nabla \fn(\udpx) \| + \| \nabla \ud(\py) \left( \nabla \fn(\udpx) - \nabla \fn(\udpy) \right) \| \\
& \leq \alpha \rho G \| \px - \py \| + (1-\alpha \mu)^2 \beta \| \px - \py \| \\
& \leq \bigg( \frac{\mu}{8} + \beta  \bigg) \| \px - \py \| \\
& \leq \frac{9 \beta}{8} \| \px-\py \|
\end{align*}
and thus $\udfn(\cdot)$ is $\tilde{\beta}=\frac{9\beta}{8}$ smooth.

Similarly, for the lower bound, we have
\begin{align*}
\| \nabla \udfn(\px) - \nabla \udfn(\py) \| & = \| \left( \nabla \ud(\px) - \nabla \ud(\py) \right) \nabla \fn(\udpx) + \nabla \ud(\py) \left( \nabla \fn(\udpx) - \nabla \fn(\udpy) \right) \| \\
& \geq  \| \nabla \ud(\py) \left( \nabla \fn(\udpx) - \nabla \fn(\udpy) \right) \| - \| \left( \nabla \ud(\px) - \nabla \ud(\py) \right) \nabla \fn(\udpx) \|
\end{align*}
using the triangle inequality. We now again proceed to bound each term. We already derived an upper bound for the second term on the right side, and we require a lower bound for the first term.
\begin{align*}
     \| \nabla \ud(\py) \left( \nabla \fn(\udpx) - \nabla \fn(\udpy) \right) \| & = \left\| \left( \eye - \alpha \nabla^2 \fnht(\py) \right) \Big( \nabla \fn(\udpx) - \nabla \fn(\udpy) \Big) \right\| \\
     & \overset{(a)}{\geq} (1-\alpha \beta) \| \nabla \fn(\udpx) - \nabla \fn(\udpy) \| \\
     & \overset{(b)}{\geq} (1-\alpha \beta) \mu \| \udpx - \udpy \| \\
     & = (1-\alpha \beta) \mu \| \px - \alpha \nabla \fnht(\px) - \py + \alpha \nabla \fnht(\py) \| \\
     & \geq \mu (1-\alpha \beta) \Big( \| \px-\py \| - \alpha \| \nabla \fnht(\px) - \nabla \fnht(\py) \| \Big) \\
     & \overset{(c)}{\geq} \mu (1-\alpha \beta) \Big( \| \px-\py \| - \alpha \beta \| \px-\py \| \Big) \\
     & \geq \mu (1-\alpha \beta)^2 \| \px - \py \| \\
     & \geq \frac{\mu}{4} \| \px-\py \|
\end{align*}
Here (a) is due to $\lambda_{\min} \Big( \eye - \alpha \nabla^2 \fnht(\py) \Big) \geq 1-\alpha \beta$, (b) is due to strong convexity, and (c) is due to smoothness of $\fnht(\cdot)$. Using both the terms, we have that
\begin{align*}
\| \nabla \udfn(\px) - \nabla \udfn(\py) \| 
& \geq  \| \nabla \ud(\py) \left( \nabla \fn(\udpx) - \nabla \fn(\udpy) \right) \| - \| \left( \nabla \ud(\px) - \nabla \ud(\py) \right) \nabla \fn(\udpx) \| \\
& \geq \left( \frac{\mu}{4} - \frac{\mu}{8} \right) \|\px - \py \| \geq \frac{\mu}{8} \|\px-\py \|
\end{align*}
Thus the function $\udfn(\cdot)$ is $\tilde{\mu}=\frac{\mu}{8}$ strongly convex.
\end{proof}

\clearpage

\section{Additional Experimental Details}
\label{app:experiments}

For all experiments, we trained our FTML method with $5$ inner batch gradient descent steps with step size $\alpha=0.1$. We use an inner batch size of $10$ examples for MNIST and pose prediction and $25$ datapoints for CIFAR. 
Except for the CIFAR NML experiments, we train the convolutional networks using the Adam optimizer with default hyperparameters~\cite{adam}. We found Adam to be unstable on the CIFAR setting for NML and instead used SGD with momentum, with a momentum parameter of 0.9 and a learning rate of $0.01$ for the first $5$ thousand iterations, followed by a learning rate of $0.001$ for the rest of learning.

For the MNIST and CIFAR experiments, we use the cross entropy loss, using label smoothing with $\epsilon=0.1$ as proposed by~\citet{szegedy2016rethinking}. We also use this loss for the inner loss in the FTML objective.

In the MNIST and CIFAR experiments, we use a convolutional neural network model with $5$  convolution layers with $32$ $3 \times 3$ filters interleaved with batch normalization and ReLU nonlinearities. The output of the convolution layers is flattened and followed by a linear layer and a softmax, feeding into the output. In the pose prediction experiment, all models use a convolutional neural network with 4 convolution layers each with 16 $5 \times 5$ filters. After the convolution layers, we use a spatial soft-argmax to extract learned feature points, an architecture that has previously been shown to be effective for spatial tasks~\cite{e2e,singh2017gplac}. We then pass the feature points through 2 fully connected layers with 200 hidden units and a linear layer to the output. All layers use batch normalization and ReLU nonlinearities.

\end{document}